\newtheorem{thm}{Theorem}[section]
\newtheorem{defn}[thm]{Definition}
\newtheorem{lem}[thm]{Lemma}
\newtheorem{cor}[thm]{Corollary}
\newcommand*{\myset}[1]{\mathcal{#1}} 
\newcommand*{\workspace}{\myset{Q}} 
\newcommand*{\config}{q} 
\newcommand*{\rthree}{\mathbb{R}^3}
\newcommand*{\rtwo}{\mathbb{R}^2}
\newcommand*{\rone}{\mathbb{R}}
\newcommand*{\rpositive}{\mathbb{R}^+}
\newcommand*{\height}{h}
\newcommand*{\aug}[1]{\hat{#1}} 
\newcommand*{\robpath}{q} 
\newcommand*{\ra}{\rightarrow} 
\newcommand*{\timet}{T} 
\newcommand*{\sptime}{\xi} 
\newcommand*{\smallheight}{\height_s} 
\newcommand*{\pathfloor}{\Tilde{\robpath}} 
\newcommand*{\projang}{\alpha} 
\newcommand*{\projplane}{\myset{P}} 
\newcommand*{\btau}{\tau} 
\newcommand*{\bgen}{\sigma} 
\newcommand*{\word}{b} 
\newcommand*{\totalnum}{K} 
\newcommand*{\triang}{\Delta} 
\newcommand*{\entangle}{\gamma} 
\newcommand*{\entangledef}{\phi} 
\newcommand*{\angleset}{\myset{A}} 
\newcommand*{\numsample}{m} 
\newcommand{\overbar}[1]{\mkern 1.5mu\overline{\mkern-1.5mu#1\mkern-1.5mu}\mkern 1.5mu}
\newcommand*{\positiveinteger}{\mathbb{Z}^+} 
\newcommand*{\identity}{e} 
\newcommand*{\dummyc}{c} 
\newcommand*{\dummyf}{f} 
\newcommand*{\dummyl}{l} 
\newcommand*{\dummyg}{g} 
\newcommand*{\permpos}{p} 
\newcommand*{\permposset}{\Phi} 
\newcommand*{\permactionset}{\myset{U}} 
\newcommand*{\permaction}{u} 
\newcommand*{\braidgroup}{B} 
\newcommand*{\bset}{\myset{B}} 
\newcommand*{\node}{\text{node}} 
\newcommand*{\childnode}{\text{childNode}} 
\newcommand*{\mapping}{\theta}
\newcommand*{\polyapprox}{\myset{C}}
\newcommand*{\stringx}{X}
\newcommand*{\tcross}{\timet^{\text{cro}}_i}
\newcommand*{\smallv}{\epsilon}
\begin{document}

\title{Path Planning for Multiple Tethered Robots \\Using Topological Braids}


\author{\authorblockN{Muqing Cao$^1$, Kun Cao$^1$, Shenghai Yuan$^1$, Kangcheng Liu$^1$, Yan Loi Wong$^2$, and Lihua Xie$^{1\ast}$}
\authorblockA{$^1$School of Electrical and Electronic Engineering, 
Nanyang Technological University, Singapore.\\
$^2$Department of Mathematics, Faculty of Science, National University of Singapore, Singapore.\\
$^\ast$ Corresponding author, Email: elhxie@ntu.edu.sg}


%
}

\maketitle

\begin{abstract}
Path planning for multiple tethered robots is a challenging problem due to the complex interactions among the cables and the possibility of severe entanglements.
Previous works on this problem either consider idealistic cable models or provide no guarantee for entanglement-free paths.
In this work, we present a new approach to address this problem using the theory of braids.
By establishing a topological equivalence between the physical cables and the space-time trajectories of the robots, and identifying particular braid patterns that emerge from the entangled trajectories, 
we obtain the key finding that all complex entanglements stem from a finite number of interaction patterns between $2$ or $3$ robots.
Hence, non-entanglement can be guaranteed by avoiding these interaction patterns in the trajectories of the robots.
Based on this finding, we present a graph search algorithm using the permutation grid to efficiently search for a feasible topology of paths and reject braid patterns that result in an entanglement.
We demonstrate that the proposed algorithm can achieve $100\%$ goal-reaching capability without entanglement for up to 10 drones with a slack cable model in a high-fidelity simulation platform.
The practicality of the proposed approach is verified using three small tethered UAVs in indoor flight experiments.
\end{abstract}

\IEEEpeerreviewmaketitle
\section*{Supplementary Material}
A video illustrating the simulation and experiments is available at
https://youtu.be/igP7eaOyZuc. The supplementary document and the source code can be found
at https://github.com/caomuqing/tethered\_robots\_path\_planning.

\section{Introduction}
Tethered robots are connected to fixed or mobile objects via tether cables \cite{Tognon2017tether}.
Depending on the applications, a tether cable may supply uninterrupted power to a robot, ensure a robust communication link, or act as a physical connection to an item for transportation.
Despite the benefits, a cable is prone to entanglements with surrounding obstacles, which may greatly limit the reachable space of the robot and even cause collisions.
Therefore, the path planning of tethered robots is an important topic to ensure the safety of the operations.
Path planning of a single tethered robot has been well studied by the research community and efficient algorithms have been proposed to navigate a tethered robot around the obstacles in a planar or 3-D environment \cite{Teshnizi2014, Yang2022,kim2014path}.
Recently, collaborative tethered robots have also been studied for applications such as 
search and exploration \cite{Shapovalov2020Exploration, Petit2022},
object gathering and removal \cite{Su2022, Bhattacharya2015}, and
item transportation \cite{Kotaru2020}.
Despite increasing interest in the path planning of multiple tethered robots, it is still a challenging problem due to the complex interactions among the cables and the difficulty in modeling the entanglement.

\begin{figure}
	\centering
    \subcaptionbox{ \footnotesize Using the proposed approach, the robots remain untangled. \label{fig: noentangle}}[0.99\linewidth]{\includegraphics[width=0.85\linewidth]{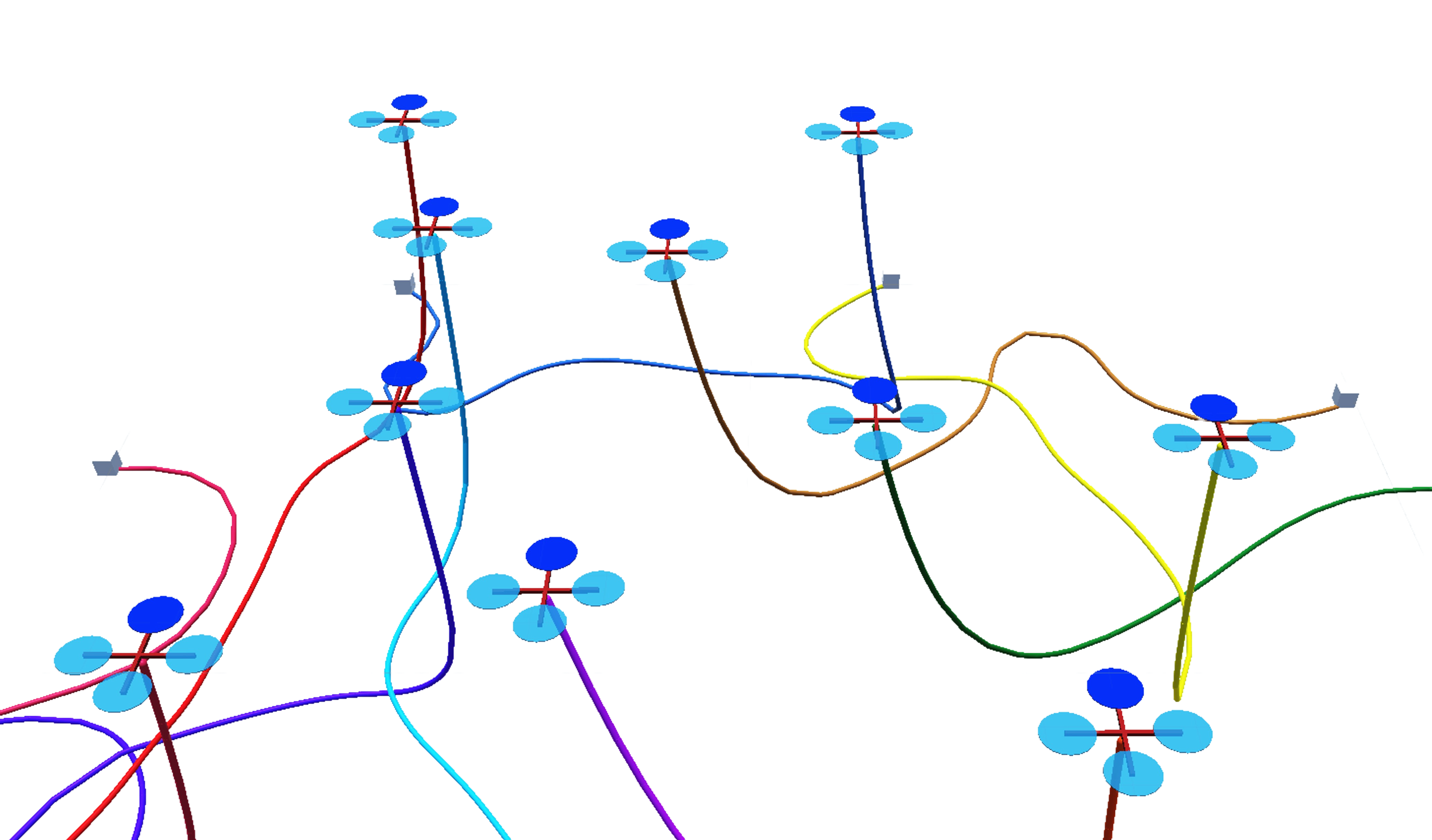}}\\
    \subcaptionbox{\footnotesize Using a baseline approach, the cables become severely entangled. \label{fig: gotentangle}}[0.99\linewidth]{\includegraphics[width=0.85\linewidth]{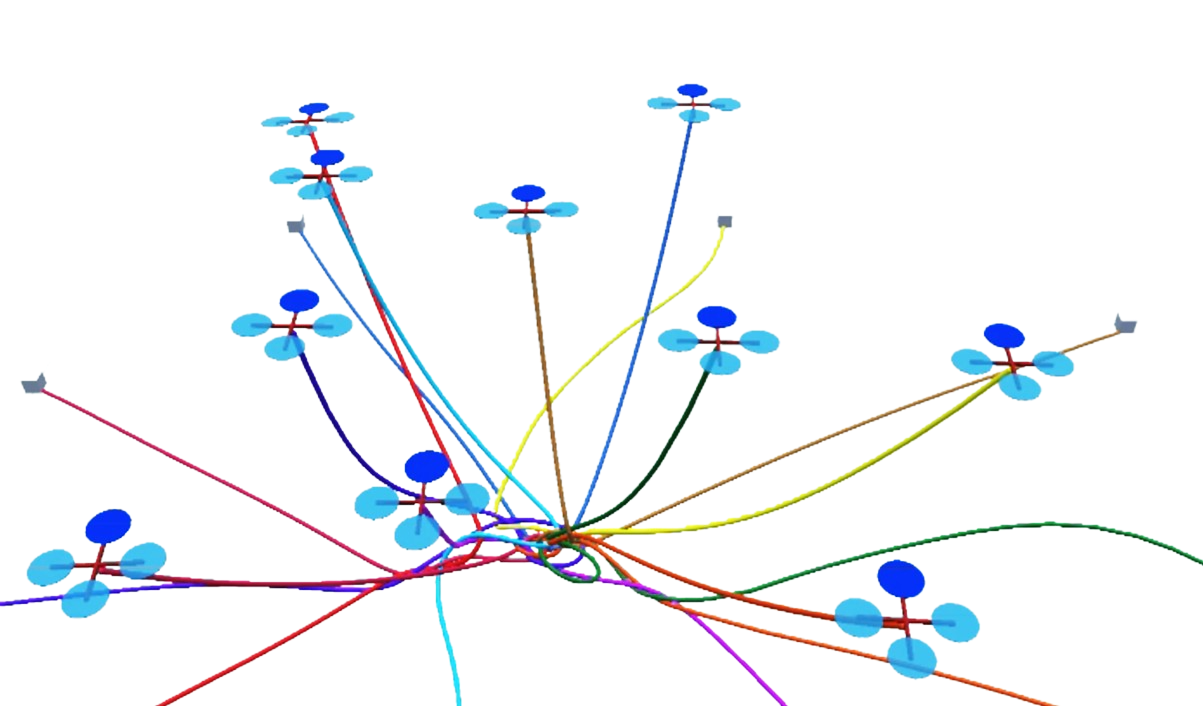}}
	\caption{\footnotesize Simulations of multiple tethered UAVs to reach random targets using (a) the proposed approach and (b) a baseline approach that does not take tethers into consideration.
    }  \label{fig: simulation}
\end{figure}

Existing works address this problem by restricting the problem settings or simplifying the cable model.
\citet{Sinden1990} considers a planar workspace and focuses on finding a permissible sequence of visiting the targets such that the straight cables do not cross each other.
A planar workspace is also considered in \citet{Zhang2019}, but the robots are allowed to push the other cables when in contact. 
\citet{Rajan2016} propose an entanglement detection system for a chain of tethered robots, which requires additional hardware on each robot for the measurement of tension and angles.
\citet{hert1999motion,hert1996ties} consider the navigation of multiple robots in a 3-D workspace with fully stretched cables, 
and define entanglement as any bending due to cable-cable contacts.
A movement of a robot results in a triangular area swept by the straight cable, hence feasible paths are found by checking intersections between the swept area and the other cables.
In practice, cables are hardly fully straight, hence such an approach does not guarantee collision avoidance and non-entanglement.
The recent work \cite{cao2022neptune} presents a distributed approach for trajectory planning of multiple tethered robots with consideration for slack cables.
Relying on a topology-guided heuristic that records the crossings among the cables, the approach generates feasible paths in an efficient manner.
However, the approach does not guarantee non-entanglement and falls into deadlocks when the number of robots increases.

The theories of knots and braids are important topics in the field of low-dimensional topology \cite{prasolov1997knots} 
and have seen recent applications in robotic systems to fold and unfold physical knots.
Disentangling one or multiple cables using robot arms is studied in \cite{Yan2020,Shivakumar-RSS-22}.
\citet{Antonio2022folding} plan paths for a team of unmanned aerial vehicles (UAVs) to form a desired knot pattern using a long cable.
The results of these works are not applicable to our problem, as they allow grasping and pulling at multiple locations along a cable, while a tethered robot is only connected to the end of a cable.
Braid theory has also been applied in recent works to characterize the topology of the interactions among moving robots\cite{Diaz2017multirobot, Mavrogiannis2019}.
However, the connection between braids and tethered robots remains unrevealed.

In this work, we aim to answer the following questions: (1) is the entanglement of the cables in a multi-robot scenario associated with special topological patterns in the braids? (2) 
can non-entangling paths be generated for multiple tethered robots in a bounded workspace, considering a slack cable model?
We first provide a formal definition of entanglement based on the concepts of isotopy and elementary moves.
By introducing a parameter that defines the allowable bending in the cables, our definition of entanglement is applicable to both slack and taut cables.
To answer the first question, we establish the topological equivalence between the cables and the space-time trajectories of the robots. 
Then, by acquiring a topological characterization of the entangled space-time trajectories using braids, 
we identify particular braid patterns necessary for the occurrence of entanglements.
The key insight is that any entanglements, however complex, are resulted from a few interaction patterns between $2$ or $3$ robots. 
To address the second question, we propose a graph search algorithm that searches for a feasible topology of paths using the concept of permutation grids.
The algorithm efficiently rejects path topologies that result in entangling braid patterns and hence guarantees non-entanglement for the generated paths.
The proposed algorithm is evaluated in a simulation involving $6$ to $10$ robots.
Comparisons with the existing approaches show that our approach is the only one that completes all tasks successfully.
The main contributions of this work are summarized as follows:
\begin{itemize}
    \item We present a formal definition of entanglement for multiple tethered robots applicable to both taut and slack cable models;
    \item We identify the braid patterns necessary for the occurrence of entanglement and establish the conditions for generating non-entangling trajectories;
    \item A permutation grid search algorithm is proposed to generate guaranteed non-entangling paths considering a slack cable model;
    \item The effectiveness of the algorithm in entanglement prevention is verified in realistic simulations and comparison with the existing approaches.
    \item Flight experiments using three UAVs verify the practicality of the approach in real tethered systems.
\end{itemize}
To the best of our knowledge, this is the first work that addresses the path planning of multiple tethered robots with guaranteed non-entanglement using a slack cable model. 

The rest of this paper is organized as follows. Notations and preliminary concepts related to isotopy and braids are discussed in Section \ref{sec:prelim}. 
In Section \ref{sec:braids}, we introduce a procedure to obtain a topological characterization of entanglements using the theory of braids and present detailed proofs.
Section \ref{sec:planning} presents the path planning algorithm using permutation grids.
Section \ref{sec:simulation} introduces the simulation setup and discusses the simulation results.
Flight experiments using small UAVs are presented in Section \ref{sec: exp}.
Section \ref{sec:conclusion} draws the conclusion.

\section{Preliminaries}\label{sec:prelim}
\subsection{Notation}
In this paper, $\mathbb{R}^n$ denotes the $n$-dimensional Euclidean space,
$\positiveinteger$ indicates the set of positive integers.
$\myset{I}_{n}$ denotes the set consisting of integers $1$ to $n$, i.e., $\myset{I}_{n} = \{1,\dots,n\}$.
A line segment with two boundary points $a$ and $b$ is denoted by $\overbar{ab}$.
More symbols will be introduced when they appear in the paper.

\subsection{Elementary Moves and Isotopy}\label{subsec: elementary}
\begin{figure}[!t]
\centering
\includegraphics[width=0.8\linewidth]{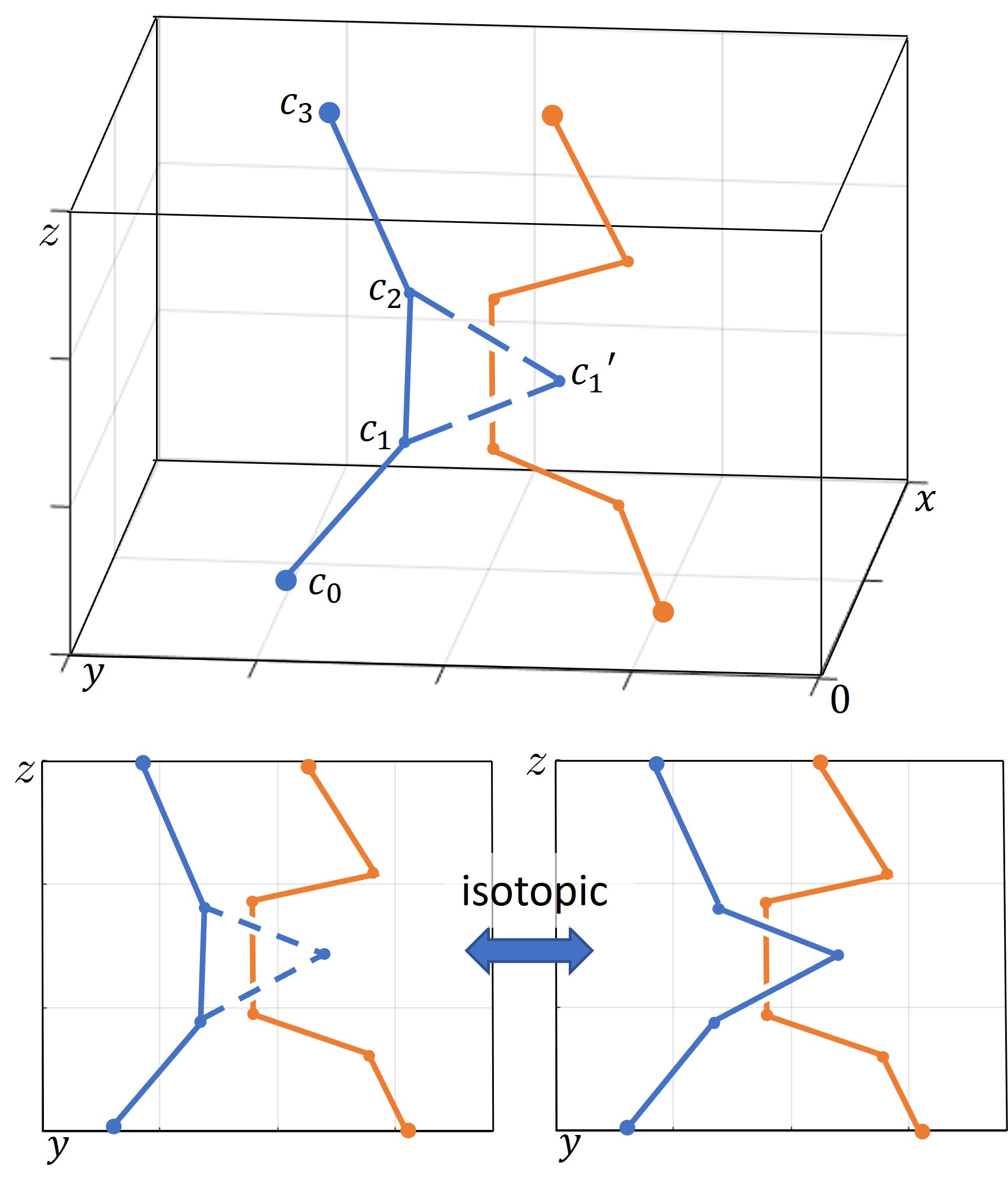}
\caption{\footnotesize An illustration of an elementary move.}
\label{fig: elementary}
\end{figure}
In this work, we consider a 3-Dimensional Euclidean space bounded by two horizontal planes, $\aug{\workspace}=\{(x,y,z)|(x,y)\in\workspace,0\leq z\leq \height\}$, where $\workspace\subset\rtwo$ is a simply connected 2-D region, $\height$ is the height of the workspace.
Denote the intersection between $\aug{\workspace}$ and the level plane $z=\dummyl\in[0,h]$ as $\aug{\workspace}_{\dummyl}$, i.e.,
$\aug{\workspace}_{\dummyl}=\{(x,y,z)|(x,y)\in\workspace,z=\dummyl\}$.
Consider a set of non-intersecting continuous curves, each starting from the floor of the workspace, $\aug{\workspace}_0$, and ending at the ceiling of the workspace, $\aug{\workspace}_\height$.
A polygonal approximation of the curves is a set of polygonal lines that shares the same starting and ending points with the original curves, and can be continuously deformed into the original curves without intersecting each other.
Consider $\overbar{\dummyc_1\dummyc_2}$ to be an edge on a polygonal chain, as shown in Figure \ref{fig: elementary}. Let $\dummyc_1'$ be a point in $\aug{\workspace}$ such that the triangle $\triang\dummyc_1\dummyc_1'\dummyc_2$ does not intersect with any other polygonal chains. 
An elementary move is an operation that replaces $\overbar{\dummyc_1\dummyc_2}$ by $\overbar{\dummyc_1\dummyc_1'}\cup\overbar{\dummyc_1'\dummyc_2}$, or in the case that $\overbar{\dummyc_1\dummyc_1'}\cup\overbar{\dummyc_1'\dummyc_2}$ is part of the original chain, replace it by $\overbar{\dummyc_1\dummyc_2}$ \cite{prasolov1997knots}.
\begin{defn}[Isotopy]
    Two sets of polygonal lines in $\aug{\workspace}$ are isotopic or ambient isotopic if one set of lines can be transformed into the other through a sequence of elementary moves.
\end{defn}

Consider a projection of polygonal lines onto a plane perpendicular to the X-Y plane. 
At the intersections between polygonal lines, overpasses and underpasses are defined based on their spatial relations in 3-D.
An elementary move in 3-D has a corresponding elementary move in 2-D, as shown in the bottom left of Figure \ref{fig: elementary}.
Similarly, two sets of projected polygonal lines in 2-D are isotopic or plane isotopic, if a sequence of 2-D elementary moves can be applied to transform one to another.


\subsection{Topological Braids}
The Artin $n$-braid group, denoted as $\braidgroup_n$, is a group with $n-1$ generators $\bgen_1, \bgen_2,\dots,\bgen_{n-1}$ and the group relations \cite{Kassel2008}
\begin{align}
    \bgen_i\bgen_j=\bgen_j\bgen_i, \; i,j\in\myset{I}_{n-1},|i-j|\geq2,\\
    \bgen_i\bgen_{i+1}\bgen_i=\bgen_{i+1}\bgen_{i}\bgen_{i+1},\;i\in\myset{I}_{n-2}.\label{eq: braidgroup}
\end{align}
The identity element in the group is denoted as $\identity$.
$\braidgroup_2$ is generated by a single generator $\bgen_1$ with no group relations, and $\braidgroup_3$ is generated by $\bgen_1, \bgen_2$ and relation (\ref{eq: braidgroup}).
A braid, $\word\in\braidgroup_n$ can be written as a composition of group generators and their inverses, $\word=\btau_1\btau_2\dots\btau_\totalnum$, where $\totalnum$ is the length of the braid, $\btau_i\in\{\bgen_1^{\pm1},\bgen_2^{\pm1},\dots,\bgen_{n-1}^{\pm1}\}$ is called an elementary braid.

A curve or a polygonal line is called ascending if it is monotonically increasing in $z$, in other words, each horizontal plane intersects with an ascending line at only one point.
An $n$-braid can be represented in a 2-D diagram consisting of $n$ ascending strings $\stringx_i(z):[0,1]\rightarrow\rone$, $i\in\myset{I}_n$.
The starting and ending points of each string satisfy $\stringx_i(0)\in\myset{I}_n$, $\stringx_i(1)\in\myset{I}_n$.
Each elementary braid $\bgen_i^{\pm1}$ in the braid word corresponds to a crossing between the $i$-th string ($i$ denotes the order of the string when counting from left to right) and the $(i+1)$-th string, where an overpass by the $i$-th string is denoted as $\bgen_i$ and the underpass is denoted as $\bgen_i^{-1}$.

In the standard definition of braids, each braid string is only defined in the domain $\{z\in[0,1]\}$.
In this work, we relax the definition by allowing the braid strings to have a domain $[0,t]$ for $t\in\rpositive$.
Furthermore, $\word(t)$ indicates the braid obtained when the crossings among the braid strings in the interval $[0,t]$ are taken into account.
Examples of braid diagrams are shown in the bottom left of Figure \ref{fig: overview}.

\section{Topological Characterization of Entanglements Using Braids}\label{sec:braids}
We consider a team of $n$ tethered robots navigating in the workspace $\aug{\workspace}$, 
and assume the robots' movements to be constrained in the ceiling of the workspace $\aug{\workspace}_\height$.
To reach a target position at a different height, a robot first moves to the same horizontal position, then descends to the target.
Each robot is attached to a base station placed on the floor $\aug{\workspace}_0$.
The cables form a set of mutually disjoint topological intervals that start at the bottom of the workspace and end at the ceiling, as shown in the top left of Figure \ref{fig: overview}.
A robot follows a path $\robpath_i:[0,\timet_i]\ra\workspace$, where $\robpath_i(0)=\config_i^s$ is the same as the horizontal position of its base and $\robpath_i(\timet_i)=\config_i^d$ is a user-defined target.
A scaled space-time trajectory of the robot is constructed as $\sptime_i:[0,\timet]\ra\aug{\workspace}$, where $\sptime_i(t)=(\robpath_i(t),t\frac{\height}{\timet})\in\rthree$ for $0\leq t<\timet_i$ and $\sptime_i(t)=(\robpath_i(\timet_i),t\frac{\height}{\timet})$ for $\timet_i\leq t\leq \timet$.
$\timet=\max_{i\in\myset{I}_{n}}\timet_i$ is the longest time taken by any robot to reach the target.
At a height $z$, $0\leq z\leq\height$, the collection of scaled space-time trajectories $\{\sptime_i\}_{i\in\myset{I}_n}$ intersects with $\aug{\workspace}$ at $n$ distinct points (given that the robot's trajectories are not in a collision). See the top right of Figure \ref{fig: overview} for an illustration of scaled space-time trajectories.

\begin{figure*}[!t]
\centering
\includegraphics[width=1.0\linewidth]{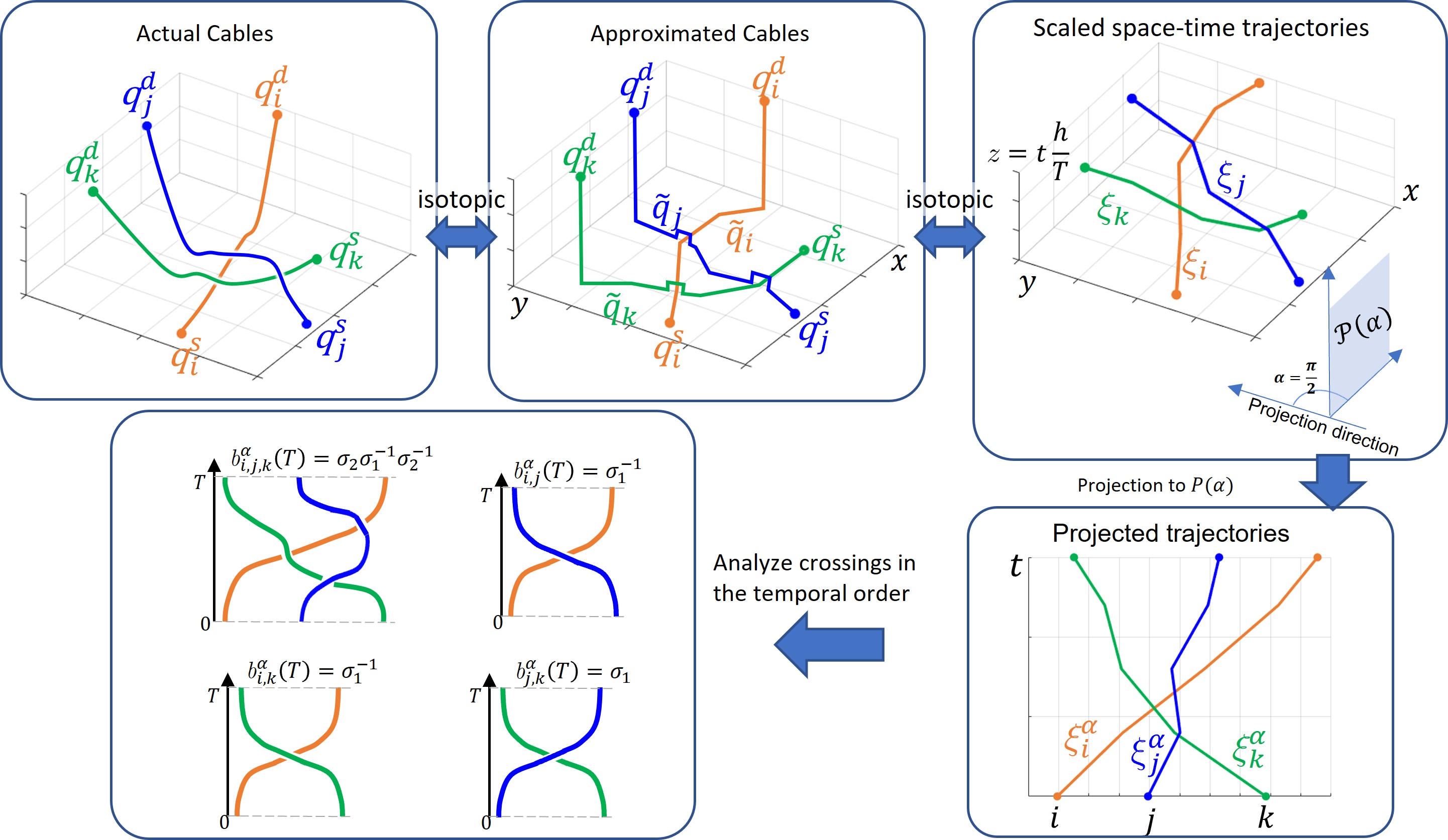}
\caption{\footnotesize Overview of the approach.}
\label{fig: overview}
\end{figure*}

\begin{lem}\label{lm: isotopy}
The set of cables connecting $n$ robots to their bases is isotopic to the scaled space-time trajectories of the robots.
\end{lem}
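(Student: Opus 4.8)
The plan is to construct an explicit ambient isotopy by tracking the robot motion forward in time and verifying that, at every instant, the current cable configuration and a suitably truncated version of the space-time trajectories remain isotopic. By the non-collision assumption both objects are sets of $n$ mutually disjoint curves in $\hat{\mathcal{Q}}$ that run from the floor to the ceiling, and crucially they share the same pinned endpoints: the floor bases $(q_i^s,0)$ and the ceiling robot positions $(q_i^d,h)$. Since elementary moves preserve endpoints, it suffices to match the interior winding of the two families, and I would do this incrementally rather than by writing down one global deformation.

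First I would pass to polygonal approximations of both the cables and the space-time trajectories, as permitted in Section~\ref{sec:prelim}, so that isotopy can be realized as a finite sequence of elementary moves. For each robot $i$ and each time $t\in[0,T]$, define the auxiliary curve obtained by taking the space-time trajectory $\xi_i$ restricted to $[0,t]$ --- the arc from $(q_i^s,0)$ up to $(q_i(t),th/T)$ --- and concatenating it with the vertical segment from $(q_i(t),th/T)$ to $(q_i(t),h)$. At $t=0$ this is the vertical segment from the base to the start position, which is exactly the initial untangled cable; at $t=T$ the terminal vertical segment degenerates and the curve is precisely $\xi_i$. The lemma therefore follows once one shows that, for every $t$, the family of auxiliary curves is isotopic to the cable configuration at time $t$.

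I would establish this by induction over the ordered sequence of crossing events, i.e. the times at which the horizontal projection of some robot's path meets the projection of another strand. Between consecutive events the left-to-right order and the over/under relations are unchanged, so both the extension of the space-time arc and the dragging of the physical cable take place inside a region disjoint from the other strands and are realized by elementary moves that preserve the induction hypothesis. At a crossing event the moving robot passes to one side of another strand, introducing a single crossing in the auxiliary curve whose over/under sign is fixed by the height (equivalently, time) ordering; because both cable endpoints are pinned, the cable cannot slide free and must acquire a crossing of the same sign. Matching the signs event by event gives the isotopy at $t=T$.

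The hard part will be the crossing-event step: making rigorous the physical claim that a slack cable's winding around another cable is determined solely by the robot's motion history and agrees in sign with the corresponding space-time crossing. This is precisely where the slack-cable model and its allowable-bending parameter must enter, to guarantee that the cable neither passes through another cable nor relaxes into a strictly simpler class than the one dictated by the trajectory, so that its isotopy class is exactly the one recorded by the space-time curve. I would also need to check that each physical cable is isotopic to an ascending curve, so that the comparison with the manifestly ascending space-time trajectories is well posed.
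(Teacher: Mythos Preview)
Your outline is sound and ultimately rests on the same physical premise as the paper: the over/under sign of each cable crossing is dictated by the temporal order in which the robots visited that horizontal location, and this is exactly the sign recorded by the space-time curves. Where you differ is in the \emph{mechanism} for turning that premise into an isotopy. You run a time-parametrized induction over crossing events, maintaining an isotopy between the cable configuration at time $t$ and a truncated-space-time-plus-vertical auxiliary curve. The paper instead introduces a single static intermediate family $\tilde q_i$: each robot's planar path laid on the floor $z=0$, lifted to a small height $h_s$ only on short intervals around crossing times, and capped by a vertical segment to the ceiling. The paper then argues in two decoupled steps: (i) the physical cables are isotopic to $\{\tilde q_i\}$ (this is precisely the slack-cable claim you flag as the hard part, and the paper treats it at the same heuristic level you do); (ii) $\{\tilde q_i\}$ is isotopic to $\{\xi_i\}$ because for every $t$ the two curves share the \emph{same} $(x,y)$-coordinate and, at every crossing, the same $z$-order, so a vertical interpolation realized by elementary moves never forces one strand through another.

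The payoff of the paper's route is that step~(ii) avoids your induction entirely: once you have an intermediate family with the same horizontal projection as $\xi_i$ and the correct $z$-order at crossings, the isotopy to $\xi_i$ is a one-shot vertical lift, no event-by-event bookkeeping required. Your approach buys a more dynamical picture that tracks the cable through the whole motion, which is conceptually appealing but costs you the extra work of the inductive step. Either way, the genuinely unproven ingredient is the same in both arguments: that the slack-cable model forces the physical cable's isotopy class to coincide with that of $\tilde q_i$ (equivalently, with your auxiliary curve). Neither you nor the paper makes this fully rigorous; it is asserted on physical grounds.
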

\begin{proof}
The shapes of the cables are closely related to the paths taken by the robots in the worksapce, because (1) a cable hanging from a robot will likely have its first contact with the ground in the neighbourhood of the X-Y coordinates of the robot, (2) when robot $i$ crosses a path that has taken by robot $j$, robot $i$'s cable will slide over the cable of robot $j$.
Therefore, we construct an approximation of the configurations of the cables, labeled as $\pathfloor_i\in\rthree$, in the following way (graphic illustration in the top middle of Figure \ref{fig: overview})
\begin{align}
    \pathfloor_i(t) = 
    \begin{cases}
        (\robpath_i(t),0),& t\in\timet_i\backslash\tcross\\
        (\robpath_i(t),\smallheight), & t\in\tcross\\
        (\robpath_i(\timet_i),\height), &t>\timet_i.
    \end{cases}    
\end{align}
$\tcross$ denotes a set of time intervals, each time interval is a small neighbourhood of the time that robot $i$ travels to a same location visited by another robot before, i.e.,
$\tcross=\{[t-\smallv, t+\smallv]|\robpath_i(t)=\robpath_j(t_j),\forall t\in(0,\timet_i], t_j\in(0,t), j\in\myset{I}_n\backslash i\}$.
$\smallheight$ is a value greater than zero, indicating a small height that a cable is elevated to. 
Clearly, the set $\{\pathfloor_i\}_{i\in\myset{I}_n}$ is isotopic to the actual cables of the robots. 
To establish an isotopy between $\{\pathfloor_i\}_{i\in\myset{I}_n}$ and the space-time trajectories, note that we can transform $\pathfloor_i(t)$ to $\sptime_i(t)$ by elementary moves for all $t\in[0,\timet]$, 
because they share the same X-Y coordinates for all $t$, and their order in the $z$-coordinates ($z$-order) are the same, i.e., for $\pathfloor_i(t)$ and $\pathfloor_j(t_j)$ such that $\robpath_i(t)=\robpath_j(t_j)$, $t>t_j$, $\pathfloor_i(t)$ has a higher $z$-coordinate than $\pathfloor_j(t_j)$, 
$\sptime_i(t)$ also has a higher $z$-coordinate than $\sptime_j(t_j)$. 
This is because a robot who travels to the same location at a later time has its space-time trajectory at a higher $z$-coordinate.
Hence, the cables can be transformed to their corresponding space-time trajectories isotopically.
\end{proof}

We specify a 2-D plane perpendicular to the X-Y plane as $\projplane(\projang)=\{(x,y,z)|x\cos\projang+y\sin\projang=0,z\in\rone\}$ where $\projang\in[0,\pi]$ is the projection angle with respect to the positive X axis. 
A set of 2-D trajectories 
$\sptime^{\projang}_{i}:[0,\timet]\ra\projplane(\projang)$, $i\in\myset{I}_n$, 
is obtained by the projection of the space-time trajectories $\sptime_i$ onto $\projplane(\projang)$ (bottom right of Figure \ref{fig: overview}).

A crossing between 2-D trajectories indicates an event of two robots swapping positions in the projected axis.
Such an event can be represented as a braid generator $\bgen_i^{\pm1}$, where $i$ indicates the ranking of the leftmost swapping robot in increasing order of the robots' projected positions. 
A braid word $\word^{\projang}(t)$ is obtained by joining the elementary braids representing the crossing events that have occurred from time $0$ to time $t$.
Let $\word^\projang_{i,j,k}(t)$ be the $3$-braid obtained by removing all the trajectories except for the trajectories of robots $i,j,k$.
Similarly, $\word^\projang_{i,j}(t)$ indicates the $2$-braid obtained when only considering the crossings between robots $i$ and $j$.
Here, $i,j,k\in\myset{I}_n$ are the fixed indices of the robots.
For each braid word, an equivalent braid diagram can be drawn, as shown in the bottom left of Figure \ref{fig: overview}. 

We have introduced a procedure to obtain a topological characterization of robot paths in the form of braids. To establish a connection between the entanglements of cables and the topological braids,
we first provide a formal definition of the entanglement based on the horizontal bending angles of the cables.
As described in Section \ref{subsec: elementary}, the cables can be approximated as a set of non-intersecting polygonal segments, 
and elementary moves can be applied to shorten the length of the cables while preserving isotopy.
Another interpretation of this shortening process is that the base station exerts tension on the cable and retracts the cable while the robots hold their positions.
The cables are shortened until they are either completely straight or in contact with other cables.

\begin{defn}[Maximum angle of rotation]
    Given a poly-gonal approximation of the cables, denoted as $\polyapprox$, a set of projected line segments onto the X-Y plane can be obtained.
    Each segment is assigned a direction consistent with the direction from the base to the robot (Figure \ref{fig: ent_angle}).
    $\entangle_i(\polyapprox)$ is the maximum angle of rotation between any of the projected segments of robot $i$, $\gamma_i\in[0,\pi]$.
    The maximum angle of rotation of the entire team for this particular polygonal approximation is $\entangle(\polyapprox)=\max_{i\in\myset{I}_n}\entangle_i(\polyapprox)$.
    The minimum of $\entangle$ among all isotopic polygonal approximations of the cables is denoted as $\underline{\entangle}$, i.e., $\underline{\entangle}=\min_{\polyapprox}\entangle(\polyapprox)=\min_{\polyapprox}\max_{i\in\myset{I}_n}\entangle_i(\polyapprox)$.
\end{defn}

Intuitively, 
$\entangle(\polyapprox)$ indicates the extent of deviation from a set of straight lines for a particular polygonal approximation $\polyapprox$, 
and $\underline{\entangle}$ indicates the minimum deviation possible, which usually occurs when the cables are fully retracted.
Only horizontal bending is considered because the degree of vertical bending is small when the robots move at a similar height.
Now, we give the definition of entanglement based on the bending angle.
\begin{figure}[!t]
\centering
\includegraphics[width=0.6\linewidth]{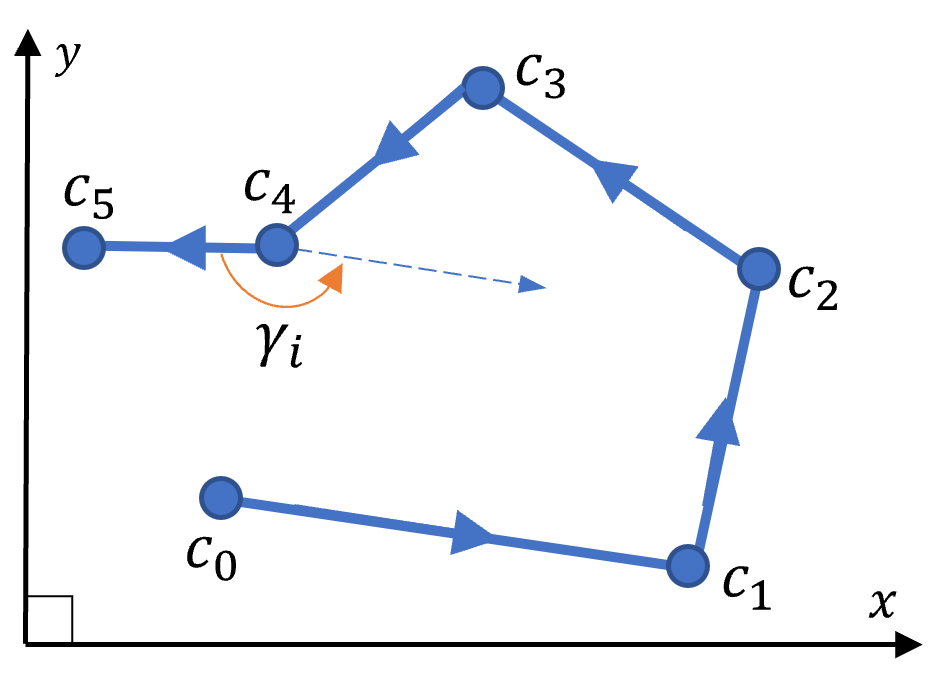}
\caption{\footnotesize The projected polygonal segments onto X-Y plane for robot $i$. The maximum angle of rotation, $\entangle_i$, is the rotation angle between $\overbar{c_4c_5}$ and $\overbar{c_0c_1}$. The dashed line is parallel to $\overbar{c_0c_1}$.}
\label{fig: ent_angle}
\end{figure}
\begin{defn}[$\entangledef$-Entanglement]
The cables are said to be $\entangledef$-entangled or in a state of $\entangledef$-entanglement when $\underline{\entangle}\geq\entangledef$ for a chosen $\entangledef\in(0,\pi]$.
\end{defn}

\begin{figure}[!t]
\centering
\includegraphics[width=\linewidth]{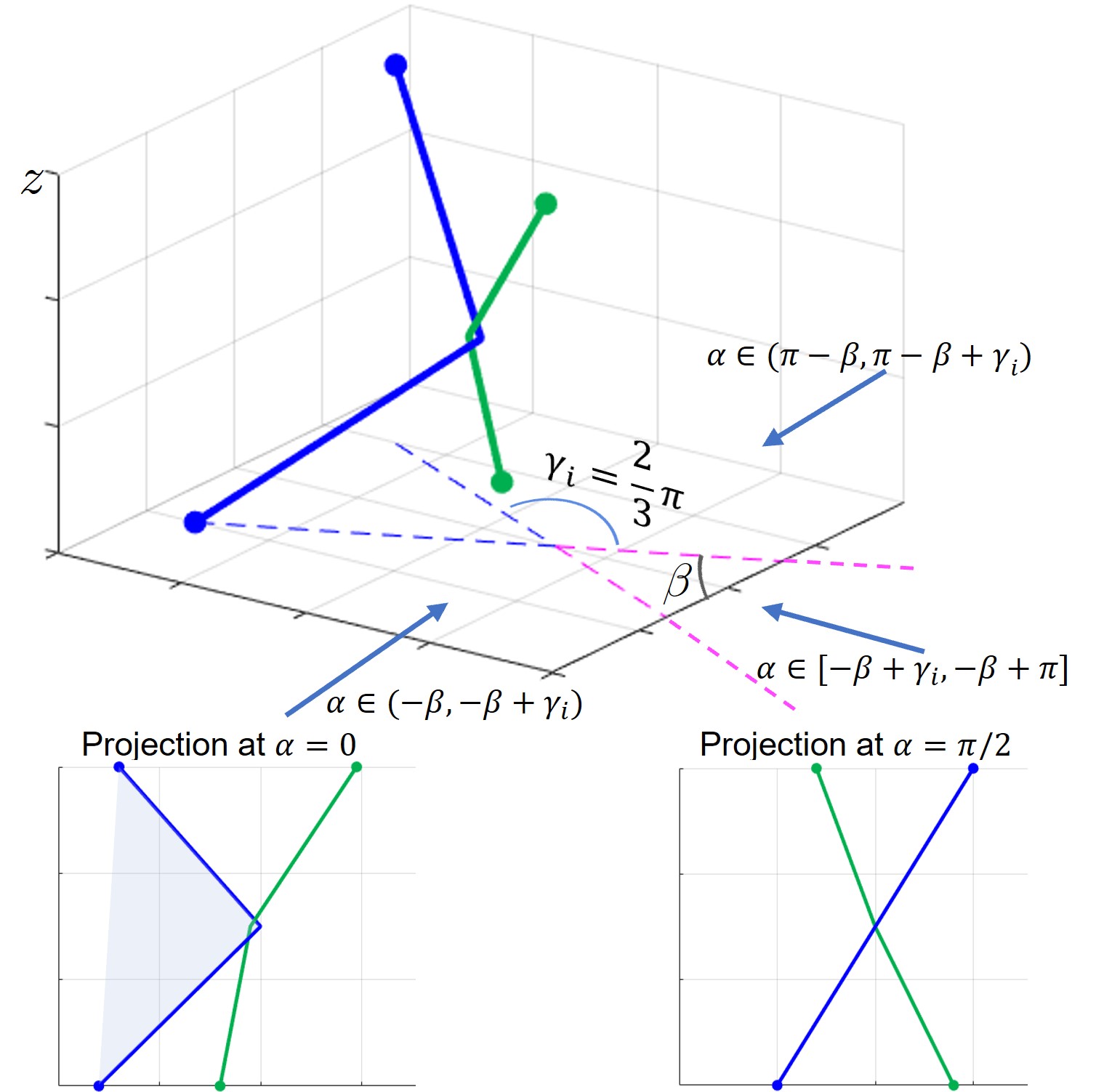}
\caption{\footnotesize An illustration of entanglement. The blue and green solid lines are the cables/trajectories of robot $i$ and $j$. The blue dashed lines are the cable/trajectory of robot $i$ projected onto the X-Y plane. The bottom two plots show the projections of the cables/trajectories onto a plane $\projplane(\projang)$. In the bottom left plot, the blue projected trajectory is non-monotonic.}
\label{fig: entangle}
\end{figure}

Figure \ref{fig: entangle} illustrates a polygonal approximation of two cables with $\entangle_i=\underline{\entangle}=\frac{2}{3}\pi$, thus the cables are $\frac{2}{3}\pi$-entangled.
When $\entangledef$ is chosen close to zero, any small bending in the cables is considered an entanglement, which is in line with the definition of entanglement for taut cables in \cite{hert1999motion}.
Slack cables generally have a higher tolerance for bending, hence a higher $\entangledef$ may be chosen.
We neglect trivial cases of entanglement where cables are bent due to coplanarity by
assuming that $\{(\config_i^s,0),(\config_i^d,h)\}$ are not co-planar with $\{(\config_j^s,0),(\config_j^d,h)\}$, $\forall i,j\in\myset{I}_n$, $i\neq j$.

\begin{cor}\label{cor: spacetime}
If the cables are $\entangledef$-entangled, then the space-time trajectories of robots are also $\entangledef$-entangled, i.e., the space-time trajectories cannot be isotopically transformed to a polygonal approximation $\polyapprox$ such that $\entangle(\polyapprox)<\entangledef$.
\end{cor}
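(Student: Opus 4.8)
The plan is to exploit the fact that $\underline{\entangle}$ is, by construction, an invariant of the isotopy class rather than a quantity attached to any single representative family of curves. Because $\underline{\entangle}=\min_{\polyapprox}\entangle(\polyapprox)$ is defined as a minimum taken over \emph{all} polygonal approximations isotopic to the given configuration, any two configurations lying in the same isotopy class are forced to share the same value of $\underline{\entangle}$, and the corollary then follows immediately from Lemma~\ref{lm: isotopy}.

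First I would record that the isotopy relation from Section~\ref{subsec: elementary} is a genuine equivalence relation on polygonal configurations: reflexivity is trivial, symmetry holds because each elementary move is explicitly reversible (the move may replace $\overbar{\dummyc_1\dummyc_1'}\cup\overbar{\dummyc_1'\dummyc_2}$ back by $\overbar{\dummyc_1\dummyc_2}$), and transitivity follows by concatenating two defining sequences of elementary moves. Next I would invoke Lemma~\ref{lm: isotopy} to place the cables $\{\pathfloor_i\}_{i\in\myset{I}_n}$ and the space-time trajectories $\{\sptime_i\}_{i\in\myset{I}_n}$ in one and the same isotopy class. By transitivity, a polygonal approximation $\polyapprox$ is isotopic to the cables precisely when it is isotopic to the space-time trajectories, so the two families of approximations over which the minimum of $\entangle(\cdot)$ is taken are literally identical. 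Taking that common minimum yields
\[
\min_{\polyapprox\,\sim\,\text{cables}}\entangle(\polyapprox)=\min_{\polyapprox\,\sim\,\{\sptime_i\}}\entangle(\polyapprox),
\]
i.e.\ the value $\underline{\entangle}$ computed for the cables equals the one computed for the trajectories. The $\entangledef$-entanglement hypothesis gives $\underline{\entangle}\geq\entangledef$ for the cables, so by the displayed equality the same bound holds for the trajectories; equivalently, no isotopic polygonal approximation of the trajectories can attain $\entangle(\polyapprox)<\entangledef$, which is exactly the assertion.

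I do not expect a real obstacle here, since the statement is essentially a restatement of isotopy-invariance once Lemma~\ref{lm: isotopy} is in hand. The only point that merits care is confirming that the two minimizations range over the \emph{same} collection of approximations, which is precisely what the equivalence-relation argument secures; I would also note in passing that the space-time trajectories admit the polygonal approximations required by the definition of $\entangle$ (being piecewise-linear ascending curves), so that $\underline{\entangle}$ is well defined for them and the minimum is meaningful. It is worth remarking that the argument in fact establishes the equivalence of $\entangledef$-entanglement between the cables and the trajectories, not merely the one-directional implication that the corollary states.
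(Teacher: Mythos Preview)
Your proposal is correct and matches the paper's reasoning: the paper states Corollary~\ref{cor: spacetime} without proof, treating it as an immediate consequence of Lemma~\ref{lm: isotopy} and the fact that $\underline{\entangle}$ is, by its definition as a minimum over the isotopy class, an isotopy invariant. Your argument simply makes this implicit step explicit, and your observation that the implication is in fact an equivalence is accurate.
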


Owing to Corollary \ref{cor: spacetime}, the identification of entanglement can be done by analyzing the space-time trajectories.
In the following lemma, we show that given suitable projection angles, 
the projections of $\entangledef$-entangled trajectories exhibit a special property.
    
\begin{lem}\label{lm:straightline}
    Define $\angleset(\numsample)=\{\frac{i}{\numsample}\pi|i=0\dots m,m\in\positiveinteger\}$ to be the set of projection angles evenly dividing the range $[0, \pi]$.
    By setting $\numsample>\frac{\pi}{\entangledef}$, there exists $\alpha\in\angleset(\numsample)$, such that the projection of a set of $\entangledef$-entangled space-time trajectories onto the plane $\projplane(\projang)$ is non-isotopic to a set of straight lines.
\end{lem}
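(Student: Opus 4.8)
The plan is to argue directly: I would exhibit one sampling angle in $\angleset(\numsample)$ that \emph{sees} the entanglement as a reversal of direction in the projected trajectory, and then show that such a reversal cannot be straightened away in the plane. First I would unpack the hypothesis. By Corollary~\ref{cor: spacetime} the space-time trajectories are themselves $\entangledef$-entangled, so $\underline{\entangle}\geq\entangledef$; that is, \emph{every} polygonal approximation $\polyapprox$ isotopic to the trajectories obeys $\entangle(\polyapprox)=\max_{i\in\myset{I}_n}\entangle_i(\polyapprox)\geq\entangledef$. Fixing one such $\polyapprox$ and a robot $i$ that attains $\entangle_i(\polyapprox)\geq\entangledef$, the definition of the maximum angle of rotation yields two of robot $i$'s projected-to-X-Y segments whose direction angles $\theta_1,\theta_2$ differ by a rotation of at least $\entangledef$. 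The fact I will carry forward is that, since $\underline{\entangle}\geq\entangledef$, this bend is \emph{irremovable}: no isotopy can collapse robot $i$'s segment directions into an arc shorter than $\entangledef$.

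Next I would run a covering argument over projection directions. Writing the in-plane horizontal coordinate on $\projplane(\projang)$ as $u(\projang)=-x\sin\projang+y\cos\projang$, a segment of X-Y direction angle $\theta$ contributes projected horizontal velocity proportional to $\sin(\theta-\projang)$; hence $\sptime_i^{\projang}$ is non-monotonic precisely when $\sin(\theta_1-\projang)$ and $\sin(\theta_2-\projang)$ have opposite signs. A short computation shows that, as $\projang$ ranges over $[0,\pi]$ (with $0$ and $\pi$ identified, since $\projplane(\projang)=\projplane(\projang+\pi)$), the set of such $\projang$ is an arc of length equal to the rotation angle between the two segments, hence of length at least $\entangledef$. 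Because $\angleset(\numsample)$ samples this circle with spacing $\pi/\numsample<\entangledef$, any arc of length at least $\entangledef$ must contain a sample point, so some $\projang\in\angleset(\numsample)$ renders $\sptime_i^{\projang}$ non-monotonic. This pigeonhole step is exactly where the hypothesis $\numsample>\pi/\entangledef$ enters.

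Finally I would upgrade ``non-monotonic projected trajectory'' to ``diagram non-isotopic to a set of straight lines,'' which I expect to be the main obstacle. Non-monotonicity alone is insufficient, since an isolated wiggling string straightens freely; the argument must invoke irremovability. I would proceed by contradiction: if the projected diagram on $\projplane(\projang)$ were plane-isotopic to a set of straight lines, then the reversal in $\sptime_i^{\projang}$ would be an unobstructed wiggle, and I would lift this planar straightening to a 3-D isotopy of the trajectories that brings robot $i$'s segment directions within an arc shorter than $\entangledef$, producing an approximation with $\entangle_i<\entangledef$ and hence $\underline{\entangle}<\entangledef$ --- contradicting $\entangledef$-entanglement. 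The delicate part is this lifting: I must show that a sequence of 2-D elementary moves straightening the projection can be realized by 3-D elementary moves that genuinely reduce the horizontal bending, using that the trajectories are ascending so the projected crossings carry consistent over/under data and the planar isotopy is compatible with the $z$-order established in Lemma~\ref{lm: isotopy}.
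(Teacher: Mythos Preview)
Your first two steps are sound and almost certainly match the paper's supplementary argument: from $\underline{\entangle}\geq\entangledef$ you extract, for any polygonal approximation, two directed X--Y segments of some robot $i$ whose direction angles differ by at least $\entangledef$, and then a pigeonhole over the $\pi/\numsample$-spaced angles in $\angleset(\numsample)$ locates an $\projang$ at which $\sin(\theta_1-\projang)$ and $\sin(\theta_2-\projang)$ have opposite signs, i.e.\ the projected trajectory $\sptime_i^{\projang}$ reverses in the horizontal coordinate. This is exactly where the hypothesis $\numsample>\pi/\entangledef$ is spent, and the paper's Figure~\ref{fig: entangle} is drawn to illustrate precisely this non-monotone projection.

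The gap is the inference you draw in Step~3. You assert that lifting a planar straightening at the single angle $\projang$ to a 3-D isotopy would ``bring robot $i$'s segment directions within an arc shorter than $\entangledef$'', and hence contradict $\underline{\entangle}\geq\entangledef$. That conclusion does not follow. A lifted isotopy that straightens the projection on $\projplane(\projang)$ only forces the $u(\projang)$-component of each trajectory's horizontal velocity to keep a constant sign; the component along $(\cos\projang,\sin\projang)$ is completely unconstrained. Thus after the lift the X--Y direction of robot $i$ is confined merely to a half-circle (an arc of length $\pi$), not to an arc of length $<\entangledef$. Concretely, take a trajectory whose X--Y direction sweeps an arc of length just over $\entangledef$; your pigeonhole picks some $\projang$ inside that arc, and if the reversal happens to be unobstructed the lifted straightening will remove it---yet the resulting 3-D approximation can still bend by almost $\pi$ in the plane orthogonal to $\hat u(\projang)$. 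So the single-angle lift does not manufacture the approximation with $\entangle_i<\entangledef$ that your contradiction needs.

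What you actually need in Step~3 is not a lift but an argument that the non-monotonicity you found is \emph{pinned}: since $\underline{\entangle}\geq\entangledef$ holds for \emph{every} isotopic approximation, the reversal in $\sptime_i^{\projang}$ cannot be a free wiggle---some other projected strand must sit inside the turning triangle (otherwise an elementary move would remove the reversal in 3-D and strictly decrease $\entangle_i$, and one could iterate). That obstruction is exactly what ``non-isotopic to straight lines'' captures in the paper's sense, and it is the bridge into Lemma~\ref{lm:braidword}. Recasting your Step~3 this way---arguing in 2-D that the reversal cannot be cleared by ascending elementary moves, using the minimality built into $\underline{\entangle}$---closes the gap without the faulty lifting claim.
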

\begin{proof}
    See Section 1 of the supplementary document.
\end{proof}
In the following lemma, we show that a set of projected trajectories non-isotopic to straight lines can be identified by analyzing their corresponding $2$-braids and $3$-braids.
\begin{lem}\label{lm:braidword}
    For a set of projected trajectories $\{\sptime_i^{\projang}(t)\}_{i\in\myset{I}_n}$, $t\in[0,\timet]$, which is non-isotopic to a set of straight lines, 
    there exists a corresponding $3$-braid $\word^\projang_{i,j,k}(t)$ or $2$-braid $\word^\projang_{i,j}(t)$,
    $t\in(0,\timet]$, $i,j,k\in\myset{I}_n$, $i<j<k$, that satisfies at least one of the following:\\
    (1) $\word^\projang_{i,j}(t)$ is equivalent to $\bgen_1\bgen_1$ or $\bgen_1^{-1}\bgen_1^{-1}$;\\
    (2) $\word^\projang_{i,j,k}(t)$ is equivalent to a word in the set 
    $\{\bgen_\dummyf^{\dummyc}\bgen^{-\dummyc}_\dummyg\bgen_\dummyf^{\dummyc}|\dummyc\in\{1,-1\},\dummyf,\dummyg\in\{1,2\}, \dummyf\neq g \}$.

\end{lem}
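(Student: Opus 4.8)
The plan is to prove the contrapositive's structural content directly: starting from a set of projected trajectories $\{\sptime_i^{\projang}(t)\}$ that is non-isotopic to straight lines, I want to extract a minimal ``witness'' to this non-triviality and show that witness must take one of the two listed forms. First I would observe that the braid word $\word^{\projang}(t)$ built from the crossing events is trivial (isotopic to straight lines, i.e.\ equal to the identity $\identity$ in $\braidgroup_n$) if and only if it can be reduced to $\identity$ using only the braid group relations. Since the trajectories are non-isotopic to straight lines, $\word^{\projang}(t)\neq\identity$, so no sequence of relations reduces it to the empty word. The goal is to localize the obstruction to a sub-braid on just two or three strands.

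The key reduction step I would carry out is the following: a crossing $\bgen_i^{c}$ (for $c\in\{1,-1\}$) can be cancelled against a later crossing only if that later crossing is its inverse $\bgen_i^{-c}$ on the \emph{same} pair of strands, after possibly commuting past disjoint generators via the relation $\bgen_i\bgen_j=\bgen_j\bgen_i$ for $|i-j|\geq 2$ and sliding past the braid relation $\bgen_i\bgen_{i+1}\bgen_i=\bgen_{i+1}\bgen_i\bgen_{i+1}$. I would track, for each ordered pair of robots, the sequence of crossings between exactly those two strands; these are precisely the generators contributing to the $2$-braids $\word^\projang_{i,j}(t)$. If every such $2$-braid is trivial then every pair of strands crosses with net algebraic crossing number zero and in a cancellable pattern, and the only remaining source of irreducibility is the interaction of three mutually adjacent strands, governed by the braid relation—this is exactly where the $3$-braid words $\word^\projang_{i,j,k}(t)$ enter. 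So I would split into two cases: either some $2$-braid fails to be trivial, or all $2$-braids are trivial but some $3$-braid fails.

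In the first case, if a $2$-braid $\word^\projang_{i,j}(t)$ is non-trivial, then because $\braidgroup_2$ is free on the single generator $\bgen_1$, its reduced form is a nonzero power $\bgen_1^{k}$ with $|k|\geq 1$. A power $\bgen_1^{\pm1}$ corresponds to a single crossing, which I claim cannot by itself force non-isotopy to straight lines once embedded back (a single swap of two strands is isotopic to two straight lines reordered); the genuine obstruction appears at $\bgen_1^{\pm2}$, giving the patterns $\bgen_1\bgen_1$ or $\bgen_1^{-1}\bgen_1^{-1}$ of statement~(1), and any higher power contains one of these as a sub-word. In the second case, when all $2$-braids are trivial but a $3$-braid $\word^\projang_{i,j,k}(t)$ is not, I would use the classification of short reduced words in $\braidgroup_3$: after cancelling inverse-adjacent pairs (impossible here without changing the pair-crossing counts, which are all zero by assumption), the minimal irreducible pattern involving alternating generators on two adjacent columns is $\bgen_f^{c}\bgen_g^{-c}\bgen_f^{c}$ with $f\neq g$, $f,g\in\{1,2\}$, matching statement~(2). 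The exhaustive check that these are the only surviving minimal words is the heart of the argument.

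The hard part will be making rigorous the claim that ``non-isotopic to straight lines'' is equivalent to ``the braid word is non-trivial in $\braidgroup_n$'' and then that every non-trivial word \emph{must contain} a sub-braid equivalent to one of the two listed patterns—rather than merely being reducible to a non-identity element. This requires a careful normal-form or rewriting argument: I expect to invoke a Markov/Tietze-style reduction showing that if neither pattern~(1) nor pattern~(2) ever occurs as (a reduction of) a sub-braid on two or three strands, then the whole word collapses to the identity via the group relations, contradicting non-isotopy. Controlling the interaction between the commutation relation and the braid relation during this reduction—ensuring the extracted witness genuinely lives on only two or three strands and is not an artifact spread across more—is where I anticipate the main technical obstacle.
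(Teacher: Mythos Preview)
Your approach is algebraic (reduce the braid word in $B_n$ via the group relations and extract a minimal obstructing sub-word), whereas the paper's proof is geometric: it triangulates the planar region bounded by a non-straight projected trajectory and the straight segment joining its endpoints, then attempts to straighten by performing elementary moves triangle by triangle in a temporal order. A triangle on which no elementary move is admissible is called a \emph{tangle}; since a single triangle in the partition contains at most one crossing or one vertex of another trajectory, a tangle involves at most three trajectories in total. An exhaustive enumeration of the local triangle configurations then produces exactly the 2-strand pattern $\sigma_1^{\pm2}$ and the 3-strand patterns $\sigma_f^{c}\sigma_g^{-c}\sigma_f^{c}$. The bound ``at most three strands'' therefore comes for free from the combinatorics of the triangulation, not from any normal-form argument in $B_n$.

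Your route has two concrete gaps. First, ``isotopic to a set of straight lines'' does not mean the braid word equals the identity $e$: the straight segments joining the prescribed start and end points may themselves cross, so the correct target is a fixed permutation braid, and your contradiction ``$b^{\alpha}(t)\neq e$'' is aimed at the wrong element. Second, and more seriously, your case split is not exhaustive. Brunnian braids in $B_n$ for $n\geq4$ are non-trivial while every sub-braid on fewer strands is trivial; for such a braid neither your Case~1 (some final 2-braid non-trivial) nor your Case~2 (all final 2-braids trivial, some final 3-braid non-trivial) applies. The lemma's conclusion is about a \emph{prefix} $b_{i,j}^{\alpha}(t)$ or $b_{i,j,k}^{\alpha}(t)$ at some intermediate time $t\in(0,T]$, not about the sub-braid at $t=T$, and your reduction never exploits this prefix structure. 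To salvage your strategy you would have to show that building any non-straightenable braid one crossing at a time forces some 2- or 3-strand projection to pass through one of the listed patterns at some step; this is essentially the full content of the lemma, and it is exactly what the paper's triangulation argument delivers directly, without invoking the global word problem in $B_n$.
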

\begin{figure}[!t]
\centering
\includegraphics[width=0.8\linewidth]{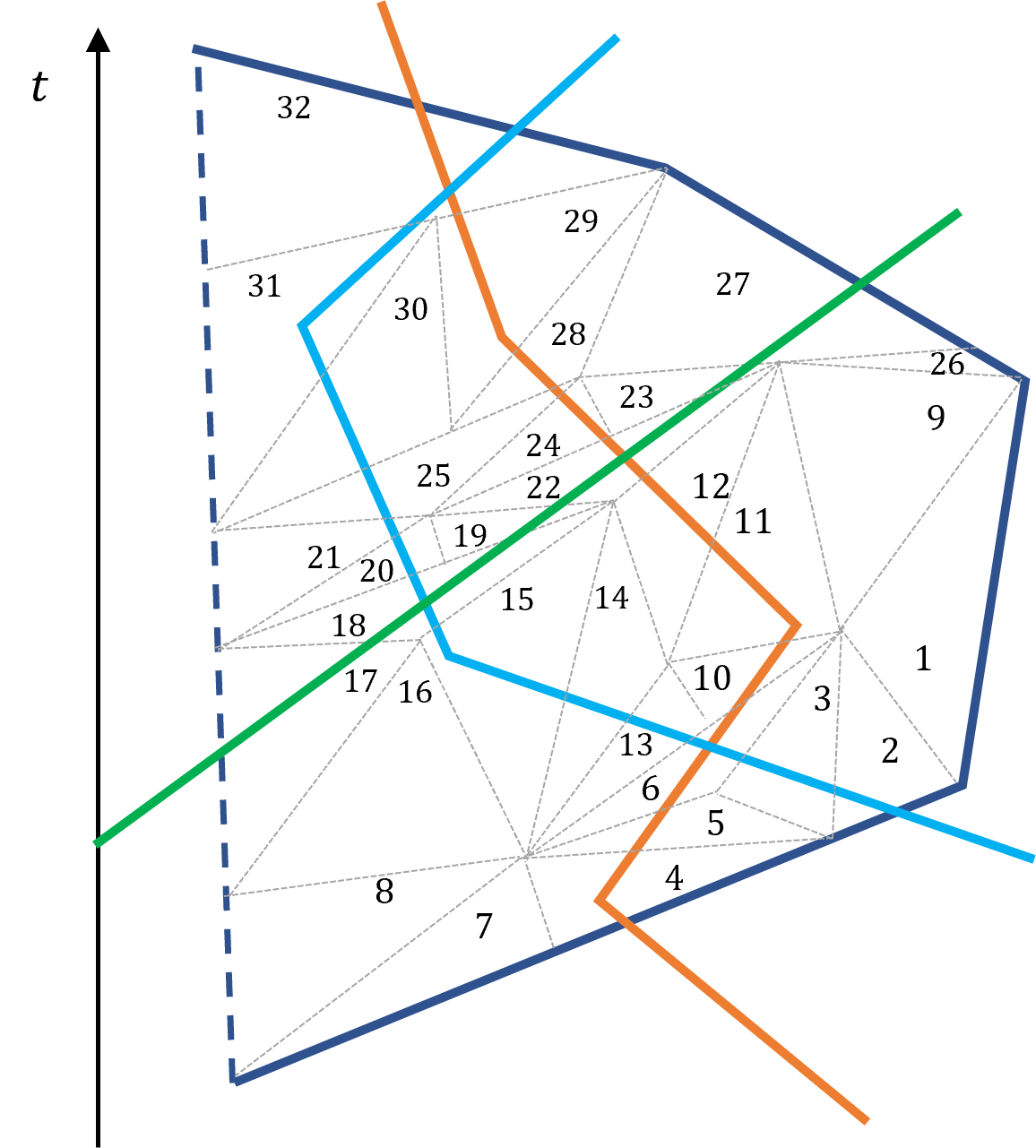}
\caption{\footnotesize The projected trajectories of robots. The gray dashed lines outline the partitioned triangles.}
\label{fig: straighten}
\end{figure}
\begin{figure}[!t]
\centering
\includegraphics[width=0.6\linewidth]{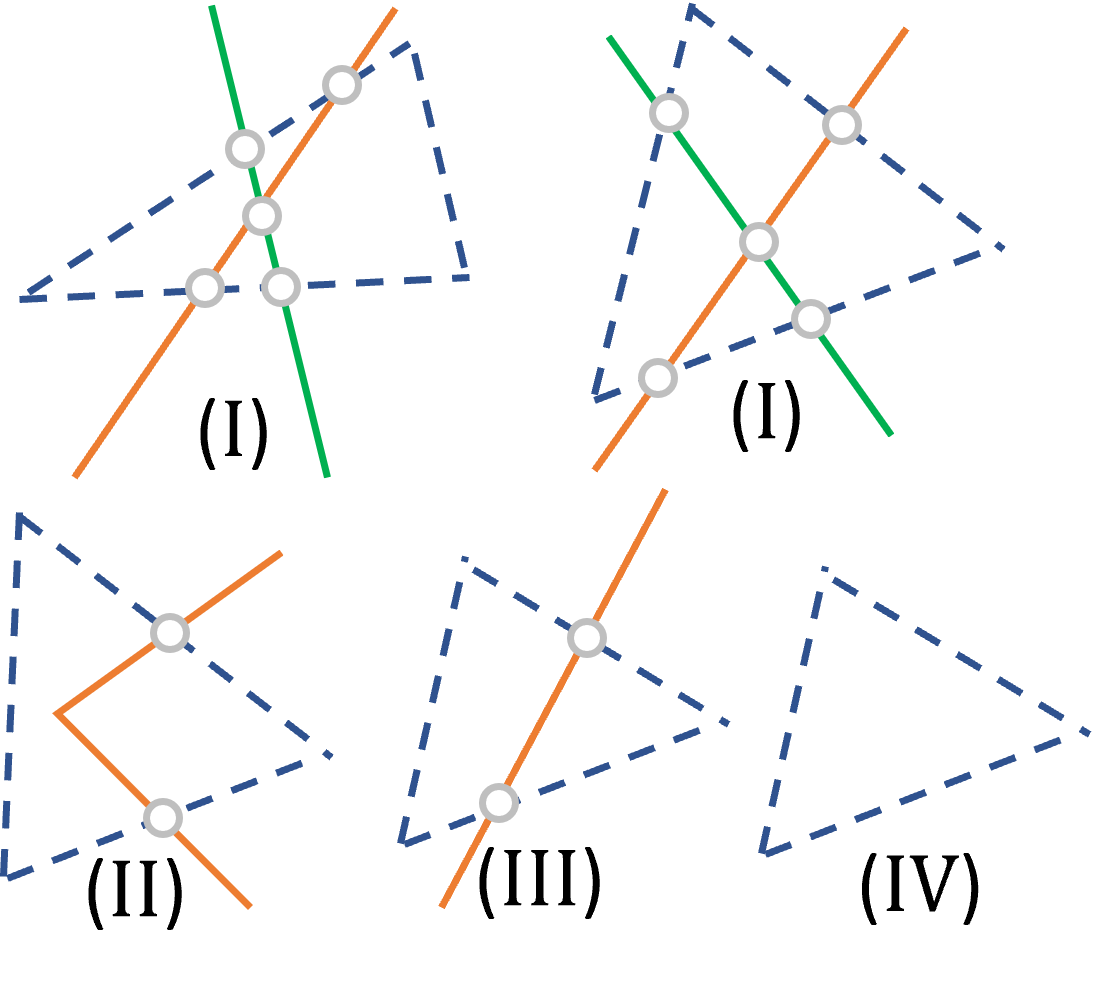}
\caption{\footnotesize Four types of triangles. The small circles indicate either an overpass or an underpass.
An elementary move may be executed from any one (respectively, two) edge of a triangle to the other two (respectively, one) edges, 
provided such a move preserves plane isotopy and both the edges before and after the move are ascending.
To follow a temporal sequence, the edge(s) before a move should not intersect with any outgoing trajectories, except when the edge(s) belong(s) to the original polygonal trajectory.}
\label{fig: triangles}
\end{figure}
\begin{proof}
    Consider a set of projected trajectories among which at least one is non-straight.
    A non-straight projected trajectory bounds a polygon area (illustrated by the area bounded by the solid dark blue and the dashed dark blue lines in Figure \ref{fig: straighten}), which can be partitioned into multiple smaller triangles of $4$ types \cite{prasolov1997knots} (Figure \ref{fig: triangles}): (I) triangles whose interiors contain a crossing between two segments; (II) triangles that contain a vertex of a polygonal trajectory; (III) triangles containing part of a straight segment without any vertex; (IV) those containing an empty space.
    Figure \ref{fig: straighten} illustrates a partitioned polygon.
    Suppose we attempt to shorten a non-straight trajectory by evaluating whether an elementary move can be applied to each of the triangles.
    Two conditions should be satisfied:
    (1) the evaluation of the triangles should follow a temporal sequence, i.e., 
    a triangle containing a later part of a trajectory is evaluated later than a triangle containing an earlier part of the same trajectory;
    (2) the transformed trajectory after each elementary move should be ascending in $t$, i.e., both the edges before and after an elementary move should be ascending.
    Given that the initial trajectories are ascending, there always exists a set of triangles and a sequence of evaluations satisfying both conditions (see Section 2 of the supplementary document for the justification for this statement).
    Figure \ref{fig: straighten} shows a valid sequence of moves with the numbering on each triangle.
    If elementary moves can be applied to all triangles, then a non-straight trajectory is isotopic to a straight line.
    Conversely, a trajectory non-isotopic to a straight line must have some triangles to which the elementary moves are not applicable, and we call these triangles tangles.
    By exhaustively listing and assessing all possible forms of the triangles, we find three such tangles (and their symmetric and mirror images), as shown in Figure \ref{fig: tangles}.
    
\begin{figure}
	\centering
    \subcaptionbox{ \footnotesize \label{fig: trianglea}}[0.45\linewidth]{\includegraphics[height=2.7cm]{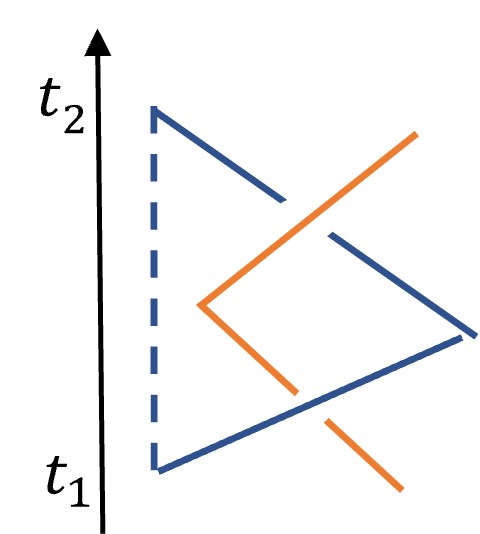}}\\
    \subcaptionbox{\footnotesize \label{fig: triangleb}}[0.45\linewidth]{\includegraphics[height=2.7cm]{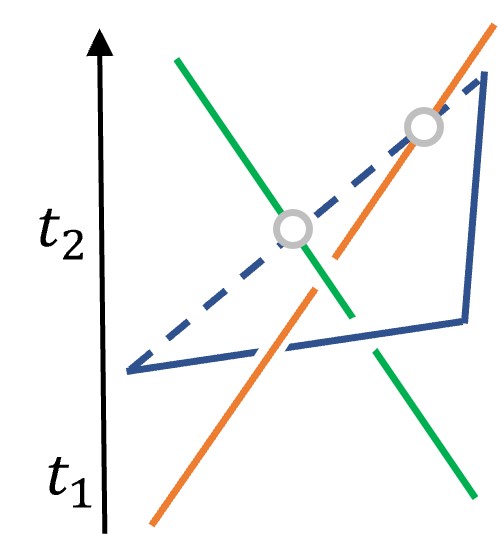}}
    \subcaptionbox{\footnotesize \label{fig: trianglec}}[0.45\linewidth]{\includegraphics[height=2.7cm]{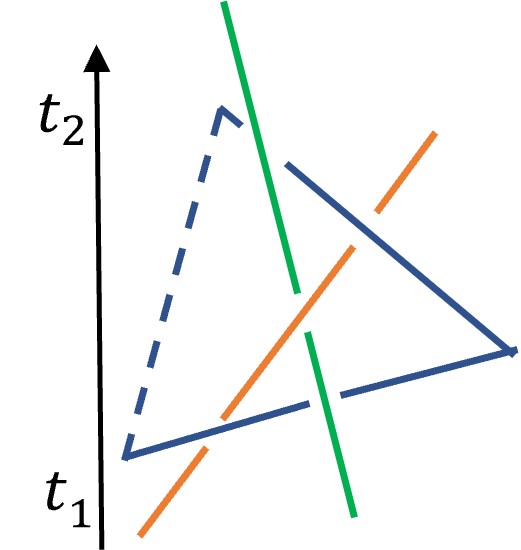}}    
	\caption{\footnotesize Three types of local tangles. The solid blue segments cannot be moved to the dashed segments through plane isotopy. (a) A 2-trajectory tangle with a braid word $\bgen_1\bgen_1$, (b) A 3-trajectory tangle with a braid word  $\bgen_1^{-1}\bgen_2\bgen_1^{-1}$. (c) A 3-trajectory tangle with a braid word $\bgen_1\bgen_2^{-1}\bgen_1\bgen_2^{-1}\bgen_1$.
    }  \label{fig: tangles}
\end{figure}
    Suppose we have applied a sequence of elementary moves on trajectory $i$ and we encounter a tangle the same as Figure \ref{fig: trianglea}, representing an interaction between trajectory $i$ and $j$ from time $t_1$ to $t_2$. 
    Since both trajectories are ascending, we can obtain a braid representation of the trajectories.
    The $2$-braid formed up to time $t_2$, $\word^\projang_{i,j}(t_2)$, is equivalent to $\bgen_1\bgen_1$, because $\word^\projang_{i,j}(t_1)$ has been reduced to identity through previous elementary moves. 
    Similar analysis can be applied to the symmetric and mirror images of Figure \ref{fig: trianglea} to obtain all the representations for $2$-braid tangles, which are $\word^\projang_{i,j}(t)=(\bgen_1\bgen_1)^{\pm 1}$.
\begin{figure}[!t]
\centering
\includegraphics[width=0.5\linewidth]{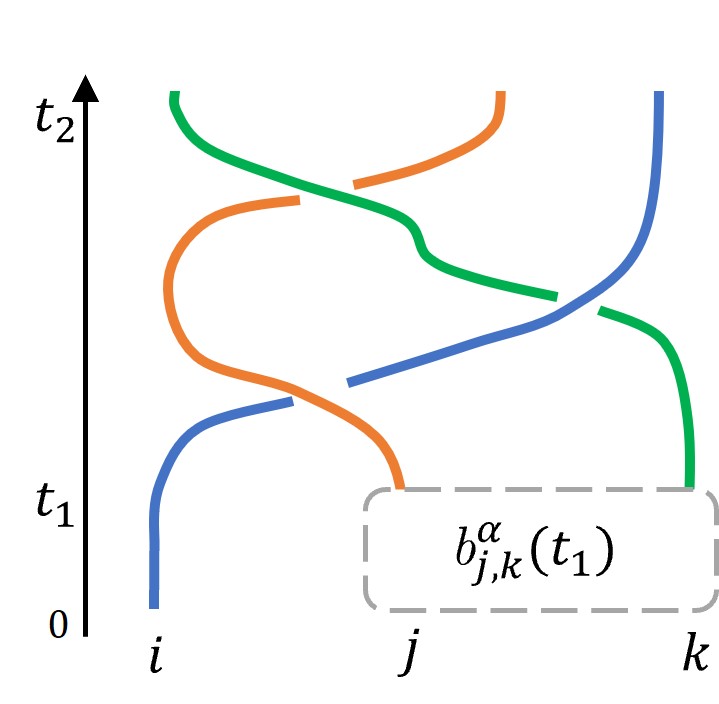}
\caption{\footnotesize The braid diagram containing a $3$-braid tangle in the form of Figure \ref{fig: triangleb}.}
\label{fig: braid3}
\end{figure}

    Suppose we have encountered an interaction among 3 trajectories, $i,j,k\in\myset{I}_n$, the same as Figure \ref{fig: triangleb}. 
    The $3$-braid $\word^\projang_{i,j,k}(t_2)$ is represented as a diagram shown in Figure \ref{fig: braid3}, 
    where $\word^\projang_{j,k}(t_1)$ is a $2$-braid depending on the trajectories of robot $j$ and $k$ up to time $t_1$.
    We first exclude the occurrence of $2$-braid tangles by assuming that the $2$-braid $\word^\projang_{j,k}(t)$ is not equivalent to $(\bgen_1\bgen_1)^{\pm 1}$, $\forall t\in [0,t_2]$.
    This is only possible if $\word^\projang_{j,k}(t_1)$ is equivalent to the identity or $\bgen_1$.
    In the first case, we have $\word^\projang_{i,j,k}(t_2)=\bgen_1^{-1}\bgen_2\bgen_1^{-1}$;
    in the second case, 
    $\word^\projang_{i,j,k}(t_2)=\bgen_2\bgen_1^{-1}\bgen_2\bgen_1^{-1}$, which has a preceding braid $\word^\projang_{i,j,k}(t_0)=\bgen_2\bgen_1^{-1}\bgen_2$ for $t_0<t_2$. 
    By applying the same analysis to all symmetric and mirror images of Figure \ref{fig: triangleb} and \ref{fig: trianglec}, and excluding cases of $2$-braid tangles, we obtain the set of words representing the $3$-braid tangles 
    $\{\bgen_\dummyf^{\dummyc}\bgen^{-\dummyc}_\dummyg\bgen_\dummyf^{\dummyc}|\dummyc\in\{1,-1\},\dummyf,\dummyg\in\{1,2\}, \dummyf\neq g \}$.

    Since the braids are invariant to the sequence of robot indices, i.e., $\word_{i,j,k}(t)=\word_{j,i,k}(t)=\word_{k,j,i}(t)$, $\forall i,j,k\in\myset{I}_n$, it is sufficient to consider distinct combinations of robot pairs and triplets
    in the examination of $2$-braids and $3$-braids, hence the condition $i<j<k$.
\end{proof}

Putting all the tools together, we provide sufficient conditions for the avoidance of entanglements.
\begin{thm}\label{thm:nonentangling}
If for all $i,j,k\in\myset{I}_n$, $i<j<k$, $t\in(0,\timet]$, $\projang\in\angleset(\numsample)=\{\frac{i}{\numsample}\pi|i=0\dots \numsample,\numsample>\frac{\pi}{\entangledef},m\in\positiveinteger\}$, 
the $3$-braids and $2$-braids, $\word^\projang_{i,j,k}(t)$ and $\word^\projang_{i,j}(t)$, obtained by projecting the space-time trajectories of $n$ robots onto $\projplane(\projang)$,  satisfies the following:

    (1) $\word^\projang_{i,j}(t)$ is not equivalent to $\bgen_1\bgen_1$ or $\bgen_1^{-1}\bgen_1^{-1}$,
    
    (2) $\word^\projang_{i,j,k}(t)$ is not equivalent to any word in the set 
    $\{\bgen_\dummyf^{\dummyc}\bgen^{-\dummyc}_\dummyg\bgen_\dummyf^{\dummyc}|\dummyc\in\{1,-1\},\dummyf,\dummyg\in\{1,2\}, \dummyf\neq g \}$,\\
    then, the cables of $n$ robots are not $\entangledef$-entangled for all time $t\in[0,\timet]$.
\end{thm}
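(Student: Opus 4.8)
The plan is to prove the statement by contraposition, chaining together the three preceding results — Corollary \ref{cor: spacetime}, Lemma \ref{lm:straightline}, and Lemma \ref{lm:braidword} — into a single implication. The theorem asserts that if the $2$-braids and $3$-braids avoid the forbidden tangle patterns across \emph{all} admissible angles, times, and index tuples, then the cables are not $\entangledef$-entangled. Its contrapositive reads: if the cables \emph{are} $\entangledef$-entangled, then \emph{some} braid must exhibit one of the forbidden patterns. This is exactly what the earlier lemmas yield when composed, so I would assume for contradiction that the cables are $\entangledef$-entangled and then construct the offending braid, contradicting hypotheses (1)--(2).

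First I would invoke Corollary \ref{cor: spacetime} to transfer the entanglement from the physical cables to the scaled space-time trajectories: since the cables are $\entangledef$-entangled, the space-time trajectories are $\entangledef$-entangled as well, meaning no isotopic polygonal approximation attains $\entangle(\polyapprox)<\entangledef$. Next I would apply Lemma \ref{lm:straightline} with the sampling density $\numsample>\pi/\entangledef$ that the theorem fixes; this produces a particular projection angle $\projang\in\angleset(\numsample)$ for which the projected trajectories $\{\sptime_i^{\projang}(t)\}_{i\in\myset{I}_n}$ are non-isotopic to a set of straight lines. Finally I would feed these projected trajectories into Lemma \ref{lm:braidword}, which guarantees that at this same angle there exist a time $t\in(0,\timet]$ and indices $i<j<k$ such that either $\word^\projang_{i,j}(t)$ equals $\bgen_1\bgen_1$ or $\bgen_1^{-1}\bgen_1^{-1}$, or $\word^\projang_{i,j,k}(t)$ lies in the set $\{\bgen_\dummyf^{\dummyc}\bgen^{-\dummyc}_\dummyg\bgen_\dummyf^{\dummyc}\}$. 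Either outcome directly violates condition (1) or (2) of the theorem's hypothesis, completing the contradiction.

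The step I expect to require the most care is not any individual deduction but the alignment of quantifiers along the chain. Lemma \ref{lm:straightline} is an \emph{existential} statement over projection angles, whereas the theorem assumes the tangle-avoidance condition \emph{universally} over every $\projang\in\angleset(\numsample)$, every $t\in(0,\timet]$, and every triple $i<j<k$; the contrapositive works precisely because a universal ``no tangle anywhere'' hypothesis is contradicted by an existential ``a tangle occurs at some angle, time, and triple.'' I would therefore make explicit that the value $\numsample>\pi/\entangledef$ used to instantiate Lemma \ref{lm:straightline} indexes the \emph{same} sampling set $\angleset(\numsample)$ that appears in the theorem, so that the angle produced by the lemma is genuinely one of the angles the hypothesis constrains. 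With that bookkeeping settled, the conclusion is immediate, and there is no further geometric or algebraic content beyond the three cited results.
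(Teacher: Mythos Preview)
Your proposal is correct and follows essentially the same route as the paper: chain Lemma \ref{lm:braidword}, Lemma \ref{lm:straightline}, and the isotopy between cables and space-time trajectories. The only cosmetic differences are that the paper argues directly by applying the contrapositive of each lemma in turn (no bad braids $\Rightarrow$ all projections straighten $\Rightarrow$ trajectories not $\entangledef$-entangled $\Rightarrow$ cables not $\entangledef$-entangled), whereas you phrase it as a proof by contradiction, and the paper cites Lemma \ref{lm: isotopy} rather than its immediate consequence Corollary \ref{cor: spacetime} for the final step.
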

\begin{proof}
Given that conditions (1) and (2) hold, Lemma \ref{lm:braidword} guarantees that the projected trajectories $\{\sptime_i^\projang\}_{i\in\myset{I}_n}$ are always isotopic to a set of straight lines, $\forall \projang\in\angleset(m)$, $t\in(0,T]$.
Lemma \ref{lm:straightline} ensures that the space-time trajectories are not $\entangledef$-entangled throughout the time interval $(0,T]$.
Finally, due to the isotopy between the cables and the space-time trajectories (Lemma \ref{lm: isotopy}), the theorem is proven.
\end{proof}

\section{Planning Using Permutation Grid}\label{sec:planning}
In this section, we present the approach for path planning of $n$ robots free of $\entangledef$-entanglement for any $\entangledef>\frac{\pi}{2}$.
To ensure Lemma \ref{lm:straightline} holds for $\entangledef>\frac{\pi}{2}$, we choose $m=2$ projection axes perpendicular to each other, and obtain the sequence of the robots in increasing order of their projected positions.
The order of robot $i$ on the $\dummyl$-th projection axis is denoted by $\permpos_i^\dummyl\in\myset{I}_n$, $\dummyl\in\{1,2\}$.
A permutation grid is a $n\times n$ grid space in which each robot takes a position at $(\permpos_i^1,\permpos_i^2)\in\rtwo$,
and none of the robot pairs occupies the same row or column, as shown in Figure \ref{fig: permgrid}.
In this way, we abstract the Euclidean workspace $\workspace$ into a discrete grid space, and the continuous positions of the robots into permutations.
A move of a robot on the permutation grid always induces an opposite movement of the adjacent robot.
Hence, given a set of robot permutation positions $\permposset=\{\permpos_i^\dummyl|i\in\myset{I}_n,\dummyl\in\{1,2\}\}$, the one-step action space $\permactionset$ consists of exchanging the positions of the adjacent robots, $\permpos_i^\dummyl$ and $\permpos_j^\dummyl$, $\forall \permpos_j^\dummyl=\permpos_i^\dummyl+1, \permpos_i^\dummyl\in\myset{I}_n\backslash n, i,j\in\myset{I}_n, \dummyl\in\{1,2\}$.
Each action represents an elementary $2$-braid $\btau\in\bgen_1^{\pm1}$ added to the $2$-braid $\word_{i,j}^\dummyl$, and an elementary $3$-braid $\btau\in\{\bgen_1^{\pm1},\bgen_2^{\pm1}\}$ added to each $3$-braid involving robot $i$ and $j$, $\word_{i,j,k}^\dummyl$, $k\in\myset{I}_n\backslash\{i,j\}$.

\begin{figure}[!t]
\centering
\includegraphics[width=1.0\linewidth]{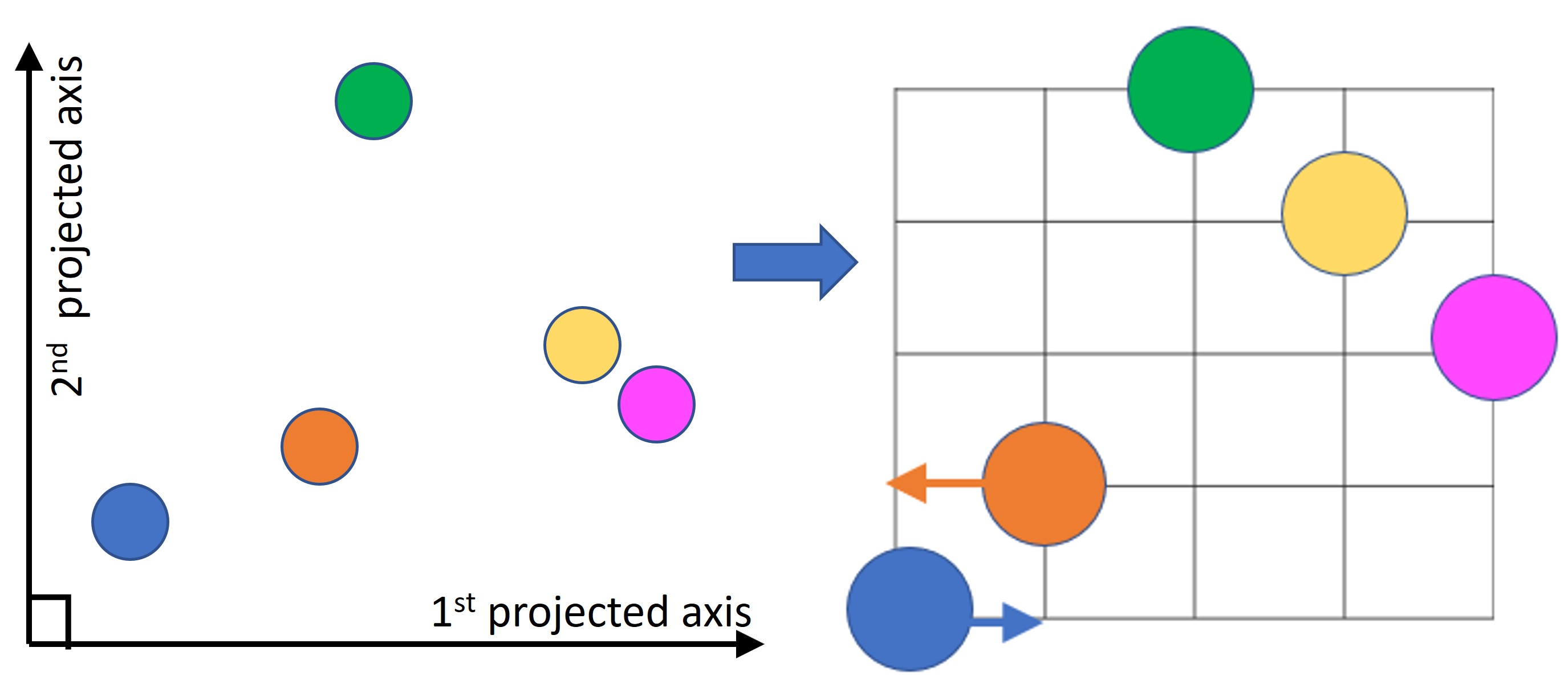}
\caption{\footnotesize Left: the positions of robots in the projected space. Right: a $5\times5$ permutation grid.}
\label{fig: permgrid}
\end{figure}

\begin{algorithm}
\DontPrintSemicolon
\KwIn{initial permutation $\permposset^s$, target permutation $\permposset^d$, initial braids $\bset=\{\word^\dummyl_{i,j},\word^\dummyl_{i,j,k}|{i,j\in\myset{I}_n,i< j<k,\dummyl\in\{1,2\}}\}$}
\KwOut{A path from $\permposset^s$ to $\permposset^d$}
\SetKwBlock{Begin}{function}{end function}
\Begin($\text{GraphSearch} $)
{
  InsertStartingNode(openList,$\permposset^s$,$\bset$)\;
  \While{openList is not empty}
  {
    $\node$ = openList.pop() \tcp{get best node}
    move $\node$ to closedList\;
    \uIf{$\node$.$\permposset=\permposset^d$\label{ln:reachestarget}}
    {
    return RetrievePath()\label{ln:retrievepath}\;
    }
    \ForAll{$\permaction\in\permactionset$\label{ln: permset}}
  {
  $\childnode$ = initializeChild($\node,\permaction$)\label{ln: childinitialize}\;
  $\btau$ = compute2Braid($i,j,\childnode.\permposset$)\label{ln:2braidstart}\;
    [$\childnode.\word_{i,j}^\dummyl$, valid] =
    updateCheck2Braid($\childnode.\word_{i,j}^\dummyl,\btau$)\label{ln:2braidcheck}\label{ln:if2braidvalid}\;
    \uIf{valid == false}
    {
    reject $\childnode$ \label{ln:2braidend}\;
    }  
  \ForAll{$k\in\myset{I}_n,k\neq i,k\neq j$\label{ln:3braidstart}}
  {
    $i,j,k$ = sort($i,j,k$)\;
    $\btau$ = compute3Braid($i,j,k,\childnode.\permposset$)\;
    [$\childnode.\word_{i,j,k}^\dummyl$, valid] = \hspace*{-1mm}updateCheck3Braid($\childnode.\word_{i,j,k}^\dummyl,\btau$)\;
    \uIf{valid == false}
    {
    reject $\childnode$ \label{ln:3braidend}\;
    }
  }
  updateCosts($\childnode$)\;
  \uIf{$\childnode$ in openList}
  {
    Update cost and parent if new cost is lower\;
  }
  \uElseIf{$\childnode$ not in closedList}
  {
  Add $\childnode$ to openList\;
  }
  }    
  }
  return emptyPath\;
  }\label{ln: endentangle}
\caption{Graph search using permutation grid}\label{alg: graphsearch}
\end{algorithm}

A graph search approach (Algorithm \ref{alg: graphsearch}) is used to generate a feasible path from a set of initial permutation positions, $\permposset^s$, to the target permutation positions, $\permposset^d$.
Each graph node represents a set of robot positions on the grid, $\permposset$, and carries the $2$-braids and $3$-braids representing the crossing actions that have taken place in all 2-robot pairs and 3-robot triplets.
In every iteration, a node is popped from the open list, and child nodes are generated from the set of permutation actions (line \ref{ln: permset}-\ref{ln: childinitialize}) by exchanging the positions of robot $i$ and robot $j$ on the $\dummyl$-th axis.
Then, the elementary $2$-braid $\btau$ induced by the permutation action is computed, and the word $\word_{i,j}^\dummyl$ is updated and checked against the condition in Theorem \ref{thm:nonentangling} (line \ref{ln:2braidstart}-\ref{ln:2braidcheck}).
A child node that does not satisfy the condition for $2$-braid is rejected immediately.
Similarly, all $3$-braids involving robots $i$ and $j$ are updated and evaluated (line \ref{ln:3braidstart}-\ref{ln:3braidend}).
The heuristic cost is the sum of the Manhattan distances for all robots to reach their targets.
In practice, a bias larger than one is chosen to favor nodes closer to the targets.
The search process continues until a node that reaches $\permposset^d$ is found (line \ref{ln:reachestarget}-\ref{ln:retrievepath}).
The pseudocodes for the updateCheck2Braid and updateCheck3Braid functions are available in Section 3 of the supplementary document.

The output of the search algorithm is a path from the initial permutation to the target permutation, $\permposset^s,\permposset^1,\permposset^2,\dots,\permposset^d$,
which defines a specific topology of the path in the real workspace.
In our approach, we use a simple linear function $\mapping:[0,\pi]\times(\myset{I}_n\times\myset{I}_n)\rightarrow\workspace$ to map the permutation grid to a $n\times n$ grid in the workspace, where the grid size is larger than a safety distance between robots to ensure collision avoidance.
Hence, the robots follow a set of waypoints $\mapping(\alpha,\permposset^s),\mapping(\alpha,\permposset^1),\dots\mapping(\alpha,\permposset^d)$, and finally move to $\{\config_i^d\}_{i\in\myset{I}_n}$ using straight paths.
During the movements of robots, the $2$-braids and $3$-braids are updated when crossings between robots take place.
These updated braid words can be used as initial conditions for subsequent planning with guaranteed entanglement avoidance.

\section{Simulations}\label{sec:simulation}
Simulations of multiple tethered UAVs are conducted in Unity game editor with AGX Dynamics plugin \footnote{https://www.algoryx.se/agx-dynamics/} installed to accurately simulate the dynamics of the cables and the effect of entanglements on the robots.
A slack and non-retractable cable of fixed length is attached to each simulated UAV.
The proposed permutation grid planning algorithm is implemented as a Robot Operating System (ROS) program which transmits the planned waypoints to the simulator through ROS TCP Connector \footnote{https://github.com/Unity-Technologies/ROS-TCP-Connector}.
In each simulation run, a team of $6$ to $10$ UAVs is tasked to travel to $100$ sets of target positions.
A task is successful when all robots reach their assigned targets;
if a set of targets cannot be reached or some of the robots are stuck, the task fails and the targets will be updated.

The following existing approaches are used for comparison: (1) Neptune \cite{cao2022neptune}, a distributed trajectory planning approach for multiple tethered robots considering a slack cable model; (2) Hert \cite{hert1999motion}, a centralized approach considering fully stretched cables; 
(3) a baseline multi-robot trajectory planner without the cable-related constraints, labeled as Neptune* 
(the codes for both Neptune and Neptune* are released by the authors of \cite{cao2022neptune}).
In all simulations, the proposed and the compared algorithms run on a mini-computer with Intel i7-8550U CPU.
Two screenshots of simulations are shown in Figure \ref{fig: simulation}.
A video of the simulations can be viewed in the supplementary material.

\begin{figure}[!t]
\centering
\includegraphics[width=1.0\linewidth]{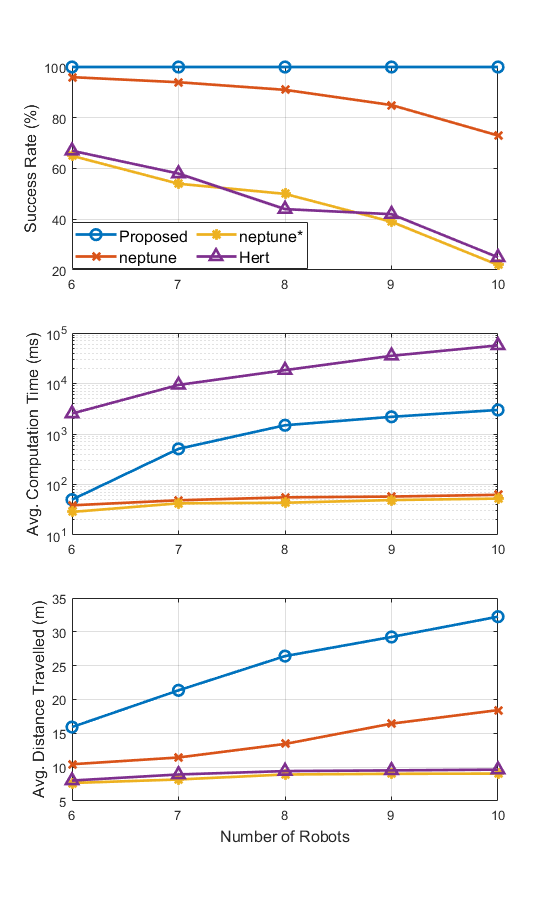}
\caption{\footnotesize  Plots of the success rate, the average computation time, and the average distance traveled for the proposed and the compared approaches.}
\label{fig: benchmark}
\end{figure}

Figure \ref{fig: benchmark} shows the success rate, the computation time, and the distance traveled for all approaches with respect to the number of robots involved.
From the top plot, we can observe that the proposed approach is the only one to ensure all tasks are successfully completed.
The success rate of Neptune is close to $100\%$ with $6$ robots but drops to below $80\%$ when the number of robots increases to $10$.
As a distributed approach, Neptune is unable to guarantee feasible and entanglement-free paths for all robots, hence freezing robots are observed in the simulation.
The success rates of Neptune* and Hert are significantly lower than the other two approaches.
In both approaches, the entanglement of the cables accumulates and results in a huge tangle in the center of the workspace (Figure \ref{fig: gotentangle}).
Eventually, only the targets near the tangle can be reached by the robots.
Hert generates 3-D paths that require a robot to move below a taut cable to avoid cable contacts.
However, in the case of a slack cable model, 
moving vertically does not generate the same path topology as in the case of taut cables, because slack cables lie on the ground.
Hence, the performance of Hert is only comparable to a baseline multi-robot planner where cables are not considered at all.

The middle plot shows the computation time of all approaches. 
Both distributed approaches generate initial trajectories within $100$ms, but the generated trajectories only ensure collision avoidance for a short planning horizon, and frequent online replanning is required.
Our approach generates trajectories for all robots in a one-time computation.
The computation time increases with the number of robots, but an average computation time of $3$s for $10$ robots is acceptable as a waiting time for on-demand targets.
Although both our approach and Hert are centralized approaches, we rely on pre-computed reduction rules to efficiently update braids and check entanglement.
On the other hand, the computation time of Hert is burdened by the expensive line-triangle intersection checking procedure to avoid cable-cable contacts.

One weakness of the proposed approach is the length of the generated paths, as seen from the bottom plot.
The average distance traveled for each robot (only considering successful tasks) is considerably higher in our approach.
This is due to the direct mapping of paths in a permutation grid into the real workspace.
Every movement of a robot accompanies an opposite movement of another robot, which could be unnecessary and increases the distance traveled by each robot.
Although direct mapping is an inefficient strategy, it is a simple implementation to validate the effectiveness of the proposed approach in generating entanglement-free paths.
In our future work, efficient topology-guided path generation will be studied and integrated with the proposed permutation grid search.

\section{Flight Experiment}\label{sec: exp}
We verify the practicality of the proposed approach using three small tethered UAVs in a $5$m$\times5$m$\times2$m indoor area.
Each UAV is connected to a ground power supply using a long power cable.
Random targets are generated during the experiment, and a ground computer computes the paths and sends them to the UAVs through a Wifi network.
Figure \ref{fig: experiment} illustrates a flight experiment, where the robots and their cables are highlighted for easy identification.
The supplementary video shows an experiment in which three UAVs complete $25$ sets of targets successfully and remain untangled.
The average computation time on a computer with Intel i7-8750H CPU is less than $10$ms, which guarantees the online performance of the algorithm.
The average completion time for a set of targets is $12.5$s with $0.7$m/s maximum velocity of the UAVs.

\begin{figure}[!t]
\centering
\includegraphics[width=1.0\linewidth]{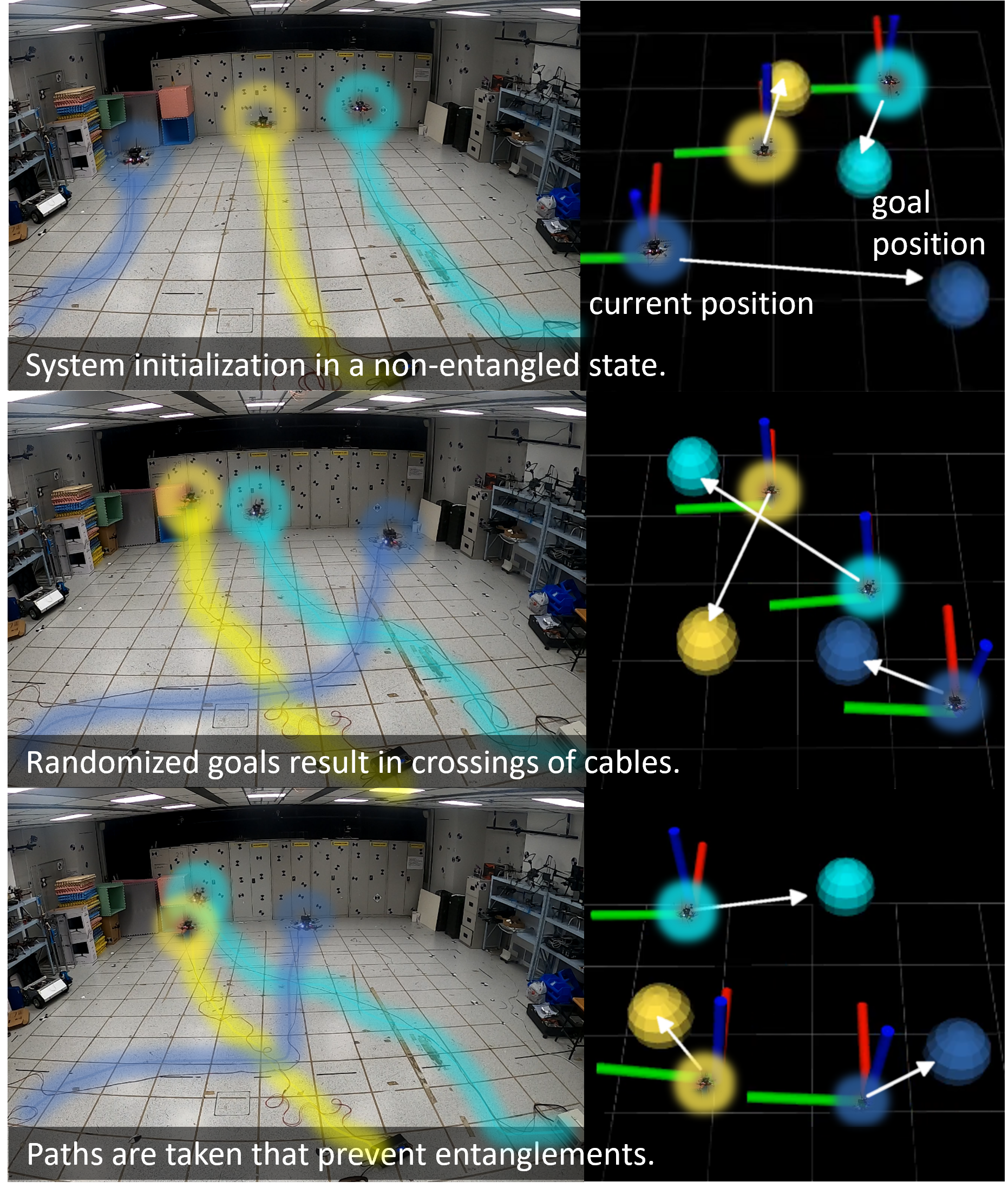}
\caption{\footnotesize Left: photos of a flight experiment. Right: visualization of positions and goal points of the robots.}
\label{fig: experiment}
\end{figure}

\section{Conclusion} 
\label{sec:conclusion}
In this work, we have investigated the problem of path planning for multiple tethered robots.
The main contribution of this work is to establish the connection between the entanglements of the cables and the topological braids representing robots' trajectories.
This is accomplished by (1) showing the topological equivalence between the cables and the robots' space-time trajectories, (2) converting the projected space-time trajectories into braids, and (3) identifying particular braid patterns that are necessary for the occurrence of entanglements.
A graph search algorithm based on the permutation grid has been proposed for generating a feasible topology of robot paths,
and paths containing the entangling braids patterns are guaranteed to be rejected.
Simulations and experiments demonstrate the effectiveness of the proposed algorithm in avoiding entanglement in complex and realistic scenarios.
To address the issue of long path length highlighted by the simulation results,  
our future research will focus on efficient multi-robot path generation in Euclidean space given a specific path topology.
 \section*{Acknowledgments}
 This research was conducted under project WP5 within the Delta-NTU Corporate Lab with funding support from A*STAR under its IAF-ICP programme (Grant no: I2201E0013) and Delta Electronics Inc, and the Wallenbery-NTU Presidential Postdoctoral Fellowship in Nanyang Technological University, Singapore.
 We thank Dr. Fedor Duzhin for providing valuable suggestions to improve the manuscript and Mr. Xinhang Xu for providing the hardware platform for simulation.

\bibliographystyle{plainnat}
\bibliography{references}

\end{document}